\DeclareMathOperator*{\argmax}{argmax}
\DeclareMathOperator*{\argmin}{argmin}
\newtheorem{definition}{Definition}
\newtheorem{theorem}{Theorem}
\newtheorem{lemma}[theorem]{Lemma}
\newcommand{\myaddcomment}[3]{{\color{#1}{\ensuremath{\langle\!\!\langle}{\bf {#2} :} {#3}\ensuremath{\rangle\!\!\rangle}}}}
\newcommand{\rishabh}[1]{\myaddcomment{orange}{Rishabh}{#1}}
\newcommand{\JTR}[1]{\myaddcomment{orange}{Jeff\ensuremath{\rightarrow}Rishabh}{#1}}
\newcommand{\jeff}[1]{\myaddcomment{blue}{Jeff}{#1}}
\newcommand{\RTJ}[1]{\myaddcomment{blue}{Rishabh\ensuremath{\rightarrow}Jeff}{#1}}
\newcommand{\rishabh}[1]{}
\newcommand{\JTR}[1]{}
\newcommand{\jeff}[1]{}
\newcommand{\RTJ}[1]{}
\newcommand{\toboth}[1]{}
\begin{document}

\twocolumn[
\aistatstitle{Near Optimal Algorithms for Hard Submodular Programs with Discounted Cooperative Costs}

\aistatsauthor{ Rishabh Iyer \And Jeff Bilmes }

\aistatsaddress{ Microsoft Corporation \And  University of Washington, Seattle}
]

\begin{abstract}
  In this paper, we investigate a class of submodular problems which in general are very hard. These include minimizing a
  submodular cost function under combinatorial constraints, which
  include cuts, matchings, paths, etc., optimizing a submodular
  function under submodular cover and submodular knapsack
  constraints, and minimizing a ratio of submodular functions. All these problems appear in several real world problems but have hardness factors of
  $\Omega(\sqrt{n})$ for general submodular cost functions. We show
  how we can achieve constant approximation factors when we restrict
  the cost functions to low rank sums of concave over modular
  functions. A wide variety of machine
  learning applications are very naturally modeled via this subclass
  of submodular functions. Our work therefore provides a tighter connection
  between theory and practice by enabling theoretically satisfying
  guarantees for a rich class of expressible, natural, and useful submodular
  cost models. We empirically demonstrate the utility of our models on
  real world problems of cooperative image matching and sensor
  placement with cooperative costs.\looseness-1
\end{abstract}

\section{Introduction} \label{sec:introduction}

Submodular functions provide a rich class of expressible models for a
variety of machine learning problems. Submodular functions occur
naturally in two flavors. In minimization problems, they model
notions of cooperation, attractive potentials, and economies of scale,
while in maximization problems, they model aspects of coverage,
diversity, and information. A set function $f: 2^V \to \mathbb R$ over
a finite set $V = \{1, 2, \ldots, n\}$ is \emph{submodular} 
\cite{fujishige2005submodular} if for all
subsets $S, T \subseteq V$, it holds that $f(S) + f(T) \geq f(S \cup
T) + f(S \cap T)$. Given a set $S \subseteq V$, we define the
\emph{gain} of an element $j \notin S$ in the context $S$ as $f(j | S)
\triangleq f(S \cup j) - f(S)$. A perhaps more intuitive
characterization of submodularity is as follows:
a function $f$ is submodular if it satisfies
\emph{diminishing marginal returns}, namely $f(j | S) \geq f(j | T)$
for all $S \subseteq T, j \notin T$, and is \emph{monotone} if $f(j |
S) \geq 0$ for all $j \notin S, S \subseteq V$.

In this paper, we address the following a family of hard submodular optimization problems. The first one is constrained submodular minimization~\cite{goel2009optimal,iwata2009submodular,curvaturemin,rkiyersemiframework2013,jegelka2011-inference-gen-graph-cuts,svitkina2008submodular}:
\begin{align*}
\mbox{Problem 1: } \min\{f(X) | X \in \mathcal C\}
\end{align*}
where the function $f$ is \emph{monotone submodular}, and $\mathcal C$ is a \emph{combinatorial constraint}, which could represent a cardinality lower bound constraint, or more complicated ones like cuts, matchings, trees, or paths in a graph. With cut constraints, this problem becomes cooperative cuts~\cite{jegelka2011-nonsubmod-vision}, and with matching constraints, we call this \emph{cooperative matchings}, which we introduce and utilize in this paper.\looseness-1

The second problem asks for minimizing a monotone submodular cost function $f$, while simultaneously maximizing a monotone submodular coverage function $g$. A natural way to model this bi-optimization problem is to introduce one of $f$ and $g$ as a constraint~\cite{nipssubcons2013}. In particular, we obtain two optimization problems:
\begin{align*}
\label{eqn:scsc_scsk_def}
\mbox{Problem 2: }& \min \{f(X) \, | \,g(X) \geq c\}, \\
\mbox{Problem 3: }& \max \{ g(X) \,| \,f(X) \leq b\},
\end{align*}

The fourth problem considered in this paper is minimizing the ratio of submodular functions~\cite{bai2016algorithms}.
\begin{align*}
\mbox{Problem 4: } \min\{f(X)/g(X) \,| \, \emptyset \subset X \subset V\}
\end{align*}

A key assumption in this paper is that the functions $f$ and $g$ in Problems 1-4, are monotone submodular -- an assumption that, as we shall see, is natural in many applications. Problem 2 is a special case of Problem 1, with $\mathcal C = \{X: g(X) \geq c\}$. Furthermore, Problem 2 and Problem 3 are closely related and, loosely speaking, duals of each other~\cite{nipssubcons2013}. Similarly, Problem 4 is closely related to Problems 2 and 3, in that given an approximation algorithm for Problems 2 or 3, we can obtain an approximation with similar guarantees for Problem 4~\cite{bai2016algorithms} (also considered in~\cite{Qian2017OptimizingRO} with general monotone set functions). Problem 1 is constrained submodular minimization, while Problems 2, 3 and 4 try to simultaneously minimize one submodular function while maximizing another.

Problems 1-4 appear naturally in several machine learning applications. However, in the worst case all four problems have polynomial hardness factors of $\Omega(\sqrt{n})$~\cite{svitkina2008submodular,rkiyersemiframework2013,nipssubcons2013,bai2016algorithms}. An important observation is that the
polynomial hardness of problems 1 - 3, comes up mainly due to the
submodular cost function $f$ -- they do not depend as much on the
constraints $\mathcal C$ or the submodular function
$g$~\cite{rkiyersemiframework2013,nipssubcons2013}. In the case of Problem 4, the hardness depends on both $f$ and $g$~\cite{bai2016algorithms}.
\begin{figure}
\includegraphics[width=0.42\textwidth]{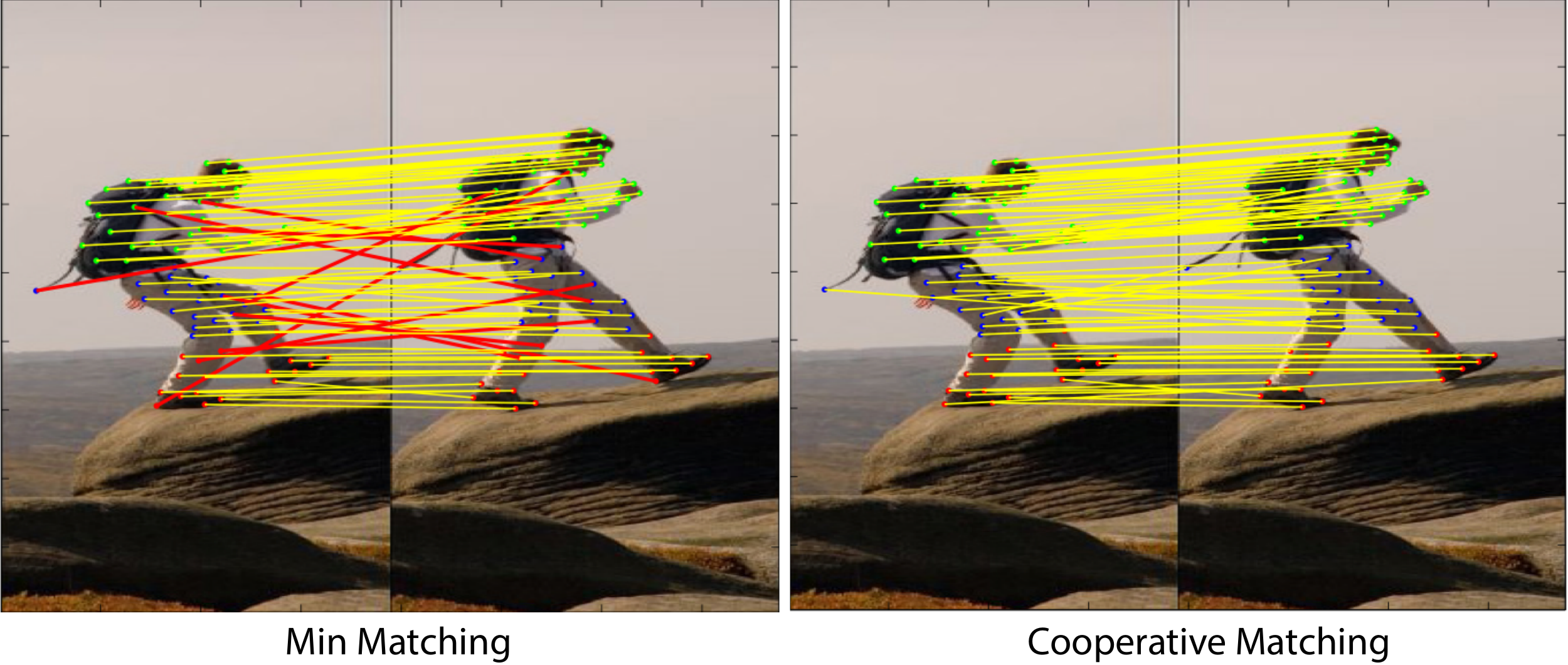}
\caption{
An illustration of cooperative matchings. The goal is to match
corresponding points between the two images. On the left,
bipartite matching is used and there are many mismatched points (indicated
by red edges). On the right, similar points are clustered and offered
a within-cluster discount via a submodular function, significantly
reducing the number of mis-matched points.}
\label{fig:illustration}
\end{figure}

On the other hand, these problems come up as models in many machine
learning applications. These lower bounds are specific to rather
contrived classes of functions, whereas much better results can be
achieved for many practically relevant cases. The pessimistic worst
case results are somewhat discouraging, d begging the need to
quantify sub-classes of submodular functions that are more amenable to these optimization problems. Only limited past work has focused on investigating these problems with potentially a subclass of submodular functions. \cite{nips2013extendedvcurv,nipssubcons2013,bai2016algorithms} provide bounds for Problems 1-4 based on the notion of \emph{curvature}, and argue how several submodular functions (e.g. clustered concave over modular functions) have bounded curvature. Their curvature bounds depend on the choice of the submodular functions, and in certain cases yield no improvement over the worst case bounds. For classes of functions with bounded curvature, their bounds yield improved results.

In this paper, we
focus on a tractable yet expressive subclass of submodular cost
functions $f$, namely low rank sums of concave over modular functions. 
\begin{definition}
\underline{Low rank} sums of concave over modular functions are the class of functions representable as $f(X) = \sum_{i = 1}^k \psi_i(w_i(X))$, where $\psi_i$s are monotone concave, and $k$ is constant or $O(\log n)$. 
\end{definition}
\emph{Low Rank} in this context means that the number of components in
the sum is small (i.e., $k$ is small).  Our use of the terminology ``low
rank'' is identical to that used in \cite{goyal2013fptas}.  We argue
how this subclass naturally models many interesting applications
of problems 1 - 4 in machine learning. We do not
need to consider the entire class of submodular functions (which
includes rather contrived instances), but only this subclass. This
observation helps us in providing better connections between theory
and practice. The main specialty of this subclass is that these
functions effectively model cooperation between objects via discounts
provided by concave functions. Moreover, we show that this subclass
admits fully polynomial time approximation schemes for Problem 1, and
constant factor approximation guarantees for Problems 2 and
3.  Similarly, we achieve constant factor approximation guarantees for Problem 4, when $f$ is a low rank sum of concave over Modular functions, and $g$ is an arbitrary submodular function, a significant improvement over~\cite{bai2016algorithms}. The bounds we obtain are significantly better than the worst case bounds, and also an improvement over the bounds achieved using the curvature~\cite{rkiyersemiframework2013,nipssubcons2013}.
\looseness-1

Low rank sums of concave over modular functions in Problems 1 - 4, fit as natural models in several machine learning problems. Below, we summarize some of these.

\textbf{Image segmentation (Cooperative Cuts):} Markov random fields
with pairwise attractive potentials occur naturally in modeling image
segmentation and related
applications~\cite{boykov2004experimental}. While models are tractably
solved using graph-cuts, they suffer from the shrinking bias problem,
and images with elongated edges are not segmented properly.
When modeled via a submodular
function, however, the cost of a cut is not just the sum of the edge weights,
but a richer function that allows cooperation between edges, and
yields superior results on many challenging
tasks
(see,
for example, the results of the image
segmentations in~\cite{jegelka2011-nonsubmod-vision}).
This was
achieved in~\cite{jegelka2011-nonsubmod-vision} by partitioning the
set of edges $\mathcal E$ of the grid graph into groups of similar
edges (or types) $\mathcal E_1, \cdots, \mathcal E_k$, and defining a
function $f(S) = \sum_{i = 1}^k \psi_i(w(S \cap \mathcal E_i)), S
\subseteq \mathcal E$, where $\psi_i$s are concave functions and $w$
encodes the edge potentials. This ensures that we offer a
\emph{discount} to edges of the same type. Moreover, the number of
types of edges are typically much smaller than the number of pixels,
so this is a low-rank sum of concave
functions. 


 \textbf{Image Correspondence (Cooperative Matchings): } The simplest model for matching key-points in pairs of images (which is also called the correspondence problem) can be posed as a bipartite matching. These models, however, do not capture interaction between the pixels. We illustrate the difficulty of this in Figure~\ref{fig:illustration}.  One kind of desirable interaction is that similar or neighboring pixels be matched together. We can achieve this as follows. First we cluster the key-points in the two images into $k$ groups (this is illustrated in Figure~\ref{fig:illustration}-left via green, blue and red key-points).  This induces a clustering of edges that can be given a discount via a submodular function (details are given in Section~\ref{sec:coop-image-match}).
In practice, the number of groups ($k$) can be much smaller than $n$
and this is a low-rank sum of concave over modular
functions. Figure~\ref{fig:illustration}-right shows how the
submodular matchings improves over the simple bipartite matching. In
particular, the minimum matching approach produces many spurious
matches between clusters (shown in red) that are avoided via the
cooperation described above.

\textbf{Sensor Placement or Feature Selection: }Often, the problem of
choosing sensor locations $A$ from a given set of possible locations
$V$ can be modeled~\cite{krause2008near,rkiyeruai2012} by maximizing
the mutual information between the chosen variables $A$ and the
unchosen set $V \backslash A$ (i.e.,
$g(A) = I(X_A; X_{V \backslash A})$). Alternatively, we may wish to
maximize the mutual information between a set of chosen sensors $X_A$
and a quantity of interest $C$ (i.e., $g(A) = I(X_A ; C)$) assuming
that the set of features $X_A$ are conditionally independent given
$C$~\cite{krause2008near}. Both these functions are submodular. Since
there are costs involved, we want to simultaneously minimize the cost
$f(A)$. Often this cost is
submodular~\cite{krause2008near,rkiyeruai2012}, since there is
typically a discount when purchasing sensors in bulk (or computing
features), and we can express this via Problems 2 and 3. For example,
there may be diminished cost for placing a sensor in a particular
location given placement in certain other locations. Similarly,
certain features might be cheaper to use given that others are already
being computed (e.g., those that use an FFT). A natural cost model in
such cases is $f(A) = \sum_{i=1}^k \psi_i( m(A \cap S_i) )$ where
$\psi_i$'s are concave, $m(j)$ is the cost of sensor (or feature) $j$
and $S_1, \cdots, S_k$ are groups of similar sensors or features.
Typically, $k$ is much smaller than $n$ and this can be expressed as
low rank sum of concave over modular functions.

\section{Background \& Existing Algorithms}
\label{sec:backgr-exist-algor}

The basic idea for most combinatorial algorithms solving Problems 1 - 4, are based on approximating the cost function $f$ with a tractable surrogate function $\hat{f}$~\cite{goel2009optimal,goemans2009approximating,jegelka2011-inference-gen-graph-cuts,curvaturemin,nipssubcons2013,rkiyersemiframework2013,bai2016algorithms}. Moreover, all four problems have similar guarantees. We characterize the quality of the solution via the notion of approximation factors. In particular, we say that an algorithm achieves an approximation factor of $\alpha \geq 1$ for Problem 1, if we can obtain a set $\hat{X}$ such that $f(\hat{X}) \leq \alpha f(X^*)$, where $X^*$ is the optimizer of Problem 1. For Problems 2 and 3, we use the notion of bi-criterion approximation factors. An algorithm is a $[\sigma, \rho]$ bi-criterion algorithm for Problem 2 if it is guaranteed to obtain a set $\hat{X}$ such that $f(\hat{X}) \leq \sigma f(X^*)$ (approximate optimality) and $g(\hat{X}) \geq \rho c$ (approximate feasibility), where $X^*$ is an optimizer of Problem 2.  Typically, $\sigma \geq 1$ and $\rho \leq 1$.  Similarly, an algorithm is a $[\rho,\sigma]$ bi-criterion algorithm for Problem 3 if it is guaranteed to obtain a set $\hat{X}$ such that $g(\hat{X}) \geq \rho g(X^*)$ and $f(\hat{X}) \leq \sigma b$, where $X^*$ is the optimizer of Problem 3. 
Moreover, problems 2 and 3 are very closely related~\cite{nipssubcons2013}, in that an approximation algorithm for one problem can be used to obtain guarantees for the other problem. The two problems also have matching hardness factors. For Problem 4, we study an algorithm which achieve $\alpha$-approximation guarantees, in that we can achieve a set $\hat{X}$ such that $h(\hat{X}) \leq \alpha h(X^*)$ where $h(X) = f(X)/g(X)$ and $X^*$ is the optimal minimizer of $h$. \looseness-1

\textbf{Supergradient based Algorithm (SGA): }One such method uses the supergradients of a submodular function~\cite{rkiyersemiframework2013,curvaturemin,goel2009optimal,jegelka2011-nonsubmod-vision,rkiyersubmodBregman2012} to obtain modular upper bounds in an iterative manner. In particular, define a modular upper bound:
\begin{align*}
m^f_{X}(Y) \triangleq f(X) - \!\!\!\! \sum_{j \in X \backslash Y } f(j| V \backslash j) + \!\!\!\! \sum_{j \in Y \backslash X} f(j| X) \geq f(Y)
\end{align*} 
The algorithm starts with the $X^0 = \emptyset$ and sequentially sets $X^{i+1}$ as the solution of the corresponding problem (1, 2 or 3) with a surrogate function as $\hat{f}(X) = m^f_{X^i}(X)$~\cite{rkiyersemiframework2013,jegelka2011-inference-gen-graph-cuts,nipssubcons2013}. In each case, this subproblem is much easier. For example, in the case of Problem 1, the subproblem becomes, 
\begin{align}
X^{i+1} = \min\{m^f_{X^i}(X) | X \in \mathcal C\},
\end{align} 
which is a linear cost problem, poly-time solvable for many constraints, like cardinality, cuts, matchings, paths etc. 

In the case of Problems 2 and 3, these subproblems are 
\begin{align*}
X^{i+1} = \min\{m^f_{X^i}(X) | g(X) \geq c\} \mbox{ and } \\
X^{i+1} = \max\{g(X) | m^f_{X^i}(X) \leq b\},
\end{align*}
which are the submodular set cover and the submodular knapsack problems respectively~\cite{wolsey1982analysis,nemhauser78,nipssubcons2013}, and are constant factor approximable to a factor of $1 - 1/e$. 

With Problem 4, the subproblem becomes,
\begin{align*}
X^{i+1} = \min\{m^f_{X^i}(X)/g(X) \,|,\ \emptyset \subset X \subset V\}
\end{align*}
This can be approximated up to a factor of $e/(e-1)$ via a Greedy algorithm~\cite{bai2016algorithms}.

\begin{lemma}
Define $\alpha_f(X^*) = \frac{|X^*|}{1 + (|X^*| - 1)(1 - \hat{\kappa_f}(X^*))} \leq \min\{|X^*|, \frac{1}{1 - \hat{\kappa_f}(X^*)}\}$, where $\hat{\kappa_f}(X) = 1 - \frac{\sum_{j \in X} f(j | X \backslash j)}{\sum_{j \in X} f(j)}$ represents the average curvature of the function $f$.
The supergradient based iterative algorithm (SGA) achieves an approximation factor of $\alpha_f(X^*)$ for Problem 1, and bicriteria factors satisfying $\sigma = \alpha_f(X^*)$ and $\rho = 1 - 1/e$ for Problems 2 and 3. Finally, SGA achieves an approximation factor of $e/(e-1)*\alpha_f(X^*)$ for Problem 4.
\end{lemma}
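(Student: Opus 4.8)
The plan is to isolate the single source of loss—replacing $f$ by its modular supergradient $m^f_{X^i}$—and to bound that loss by $\alpha_f(X^*)$ via the average curvature, after which each subproblem contributes only the loss of its own (known) approximation algorithm. The central estimate to establish first is the purely combinatorial inequality $\sum_{j \in X} f(j) \le \alpha_f(X)\, f(X)$ for every $X$. To prove it, write $k = |X|$, $T = \sum_{j \in X} f(j)$ and $L = \sum_{j \in X} f(j \mid X \backslash j)$, so that by definition $L = (1 - \hat{\kappa_f}(X))\,T$. For any ordering of $X$, telescoping gives $f(X) = \sum_i f(x_i \mid S_{i-1})$ with $S_{i-1}$ the prefix; submodularity yields $f(x_i \mid S_{i-1}) \ge f(x_i \mid X \backslash x_i)$ for every $i$ together with $f(x_1 \mid S_0) = f(x_1)$, hence $f(X) \ge L + f(x_1) - f(x_1 \mid X \backslash x_1)$. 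Averaging this bound over the $k$ choices of the first element (each element is first with probability $1/k$) collapses the right-hand side to $\tfrac{1}{k}\big(T + (k-1)L\big)$, and substituting $L = (1-\hat{\kappa_f}(X))\,T$ gives exactly $T \le \alpha_f(X)\, f(X)$. The two stated inequalities for $\alpha_f$ follow at once: $1 + (k-1)(1-\hat{\kappa_f}) \ge 1$ gives $\alpha_f \le k$, while $\hat{\kappa_f} \ge 0$ gives $\alpha_f \le 1/(1-\hat{\kappa_f})$.

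Next I would record two elementary facts about the surrogate: $m^f_X(Y) \ge f(Y)$ for all $Y$ with equality at $Y = X$ (tightness of the supergradient at its base point), and $m^f_{\emptyset}(Y) = \sum_{j \in Y} f(j)$ (immediate from the definition with $X = \emptyset$ and $f(\emptyset)=0$). For Problem 1, feasibility of $X^*$ gives $m^f_{\emptyset}(X^1) \le m^f_{\emptyset}(X^*)$ since $X^1$ minimizes the surrogate over $\mathcal C$; chaining with the upper-bound property and the two facts yields $f(X^1) \le m^f_{\emptyset}(X^1) \le m^f_{\emptyset}(X^*) = \sum_{j \in X^*} f(j) \le \alpha_f(X^*)\, f(X^*)$. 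Finally, for $i \ge 1$ we have $f(X^{i+1}) \le m^f_{X^i}(X^{i+1}) \le m^f_{X^i}(X^i) = f(X^i)$, so the iterates are non-increasing in $f$ and the returned set inherits the bound $\alpha_f(X^*)$.

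For Problems 2 and 3 the same surrogate reduces each subproblem to a modular-cost instance: a submodular set cover $\min\{m^f_{X^i}(X) : g(X) \ge c\}$ for Problem 2 and a submodular knapsack $\max\{g(X) : m^f_{X^i}(X) \le b\}$ for Problem 3. Following the cover/knapsack framework of~\cite{nipssubcons2013}, I would invoke the $1 - 1/e$ greedy guarantees for these modular-cost problems~\cite{wolsey1982analysis,nemhauser78} to supply the factor $\rho = 1 - 1/e$ on the $g$-side, while the surrogate-to-$f$ comparison (exactly as in Problem 1) supplies $\sigma = \alpha_f(X^*)$ on the $f$-side; since the two criteria are governed by separate arguments, they assemble into the claimed $[\sigma,\rho]$ bicriterion guarantee. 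Problem 4 is analogous: the subproblem $\min\{m^f_{X^i}(X)/g(X)\}$ is a modular-over-submodular ratio, approximable to $e/(e-1)$ by the greedy algorithm of~\cite{bai2016algorithms}, and composing this with the $\alpha_f(X^*)$ surrogate loss gives the factor $\tfrac{e}{e-1}\alpha_f(X^*)$.

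The step I expect to be the crux is the curvature estimate in the first paragraph—specifically the averaging-over-first-elements trick, which is what produces the precise denominator $1 + (|X^*|-1)(1 - \hat{\kappa_f}(X^*))$ rather than a cruder bound; everything downstream is bookkeeping plus citation of the greedy guarantees for the modular subproblems. A secondary point requiring care is that the surrogate bound is anchored at the first iterate, where $X^0 = \emptyset$ makes $m^f_{\emptyset}(X^*) = \sum_{j \in X^*} f(j)$ exact; one must check that the monotone-improvement argument of Problem 1, and the corresponding feasibility and coverage invariants for Problems 2--4, carry this anchored bound through to the final returned set.
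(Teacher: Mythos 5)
Your proof is correct, and it takes essentially the approach the paper itself relies on: the paper gives no proof of this lemma, saying only that it ``follows easily from'' \cite{rkiyersemiframework2013,curvaturemin,nipssubcons2013,bai2016algorithms}, and your reconstruction matches what those references do --- the averaging-over-first-element argument yielding $\sum_{j \in X} f(j) \leq \alpha_f(X) f(X)$ is precisely the curvature bound of \cite{curvaturemin}, and the chain $f(X^1) \leq m^f_{\emptyset}(X^1) \leq m^f_{\emptyset}(X^*) = \sum_{j \in X^*} f(j) \leq \alpha_f(X^*) f(X^*)$ anchored at $X^0 = \emptyset$, composed with the $1-1/e$ (resp.\ $e/(e-1)$) guarantees for the modular-cost subproblems, is exactly the SGA analysis of \cite{rkiyersemiframework2013,nipssubcons2013,bai2016algorithms}. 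The one caveat you flag (carrying the first-iteration bound through to the returned set for Problems 2--4, where later iterates are not feasible for the exact-cover subproblem) is real but handled in the standard way, by returning the best iterate found.
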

This Lemma follows easily from the results in~\cite{rkiyersemiframework2013,curvaturemin,nipssubcons2013,bai2016algorithms}. We can also achieve a non-bicriteria approximation factor for Problem 2, which is worse than the bicriteria factor by a $\log$ factor~\cite{nipssubcons2013}. A key quantity which defines the approximation factor above is the average curvature $\hat{\kappa_f}(X^*)$, which in turn depends on the concave functions. If the concave function is $\psi_i(x) = x^a, a \in (0, 1)$, SGA admits approximation factors of $O(|X^*|^{1-a})$~\cite{curvaturemin}. On the other hand, if the concave function is $\psi(x) = \log(1+ x)$, the guarantees are $O(|X^*|)$, which is much poorer. 

The supergradient based algorithm is easy to implement, and also works well in practice~\cite{rkiyersemiframework2013,jegelka2011-nonsubmod-vision}. For the general class of submodular functions, these results are close to the optimal bounds, and are, in fact, tight for some constraints. Nevertheless, the worst case guarantees seem discouraging, particularly for the class of low rank sums of concave over modular functions that we consider here, and that as mentioned above are natural 
for many applications.

\textbf{Ellipsoidal Approximation based Algorithm (EA):} Another generic approximation of a submodular function, introduced by Goemans et.\  al~\cite{goemans2009approximating}, is based on approximating the submodular polyhedron by an ellipsoid. The main result states that any polymatroid (monotone submodular) function $f$,
can be approximated by a function of the form $\sqrt{w^f(X)}$ for a
certain modular weight vector $w^f \in \mathbb R^V$, such that $\sqrt{w^f(X)} \leq f(X) \leq
O(\sqrt{n}\log{n}) \sqrt{w^f(X)}, \forall X \subseteq V$.
A simple trick then provides a curvature-dependent approximation~\cite{curvaturemin}. We have the following result borrowed from~\cite{nipssubcons2013,curvaturemin,bai2016algorithms}.
\begin{lemma}
Define $\alpha = O(\frac{\sqrt{n}}{1 + (\sqrt{n} - 1)(1 - \kappa_f}))$, where $\kappa_f = 1 - \frac{\min_{j \in X} f(j | V \backslash j)}{\min_{j \in X} f(j)}$ represents the worst case curvature of the function $f$.. The Ellipsoidal Approximation based algorithm (EA) achieves an approximation factor of $\alpha$  for Problem 1, and bicriteria factors satisfying $\sigma = \alpha$ and $\rho = 1 - 1/e$ for Problems 2 and 3. Similarly EA achieves an approximation guarantee of $e\alpha/(e-1)$ for Problem 4.
\end{lemma}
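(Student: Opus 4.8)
The plan is to build a single curvature-refined surrogate cost $\hat f$ that two-sidedly approximates $f$ up to the factor $\alpha$, and then to invoke, problem by problem, the fact that the $\hat f$-versions of the four subproblems already admit the standard guarantees. Concretely, I would first construct $\hat f$ by separating the modular part of $f$ from its ``fully curved'' remainder. Writing $m(j) = f(j)$ and using the worst-case curvature $\kappa = \kappa_f$, which guarantees $f(j\mid V\setminus j)\ge (1-\kappa)f(j)$ for every $j$, I define the curve-normalized function $f^\kappa(X) = \frac{1}{\kappa}\big(f(X) - (1-\kappa)m(X)\big)$. Monotonicity of $f^\kappa$ follows from that curvature inequality together with diminishing returns, and $f^\kappa$ is submodular as the difference of a submodular and a modular function, so it is a polymatroid of unit curvature. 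Applying the Goemans et al.\ ellipsoidal approximation to $f^\kappa$ yields a modular $w^\kappa$ with $\sqrt{w^\kappa(X)}\le f^\kappa(X)\le \beta\sqrt{w^\kappa(X)}$ for $\beta = O(\sqrt n\log n)$, and I set
\[
\hat f(X) = (1-\kappa)\,m(X) + \kappa\sqrt{w^\kappa(X)}.
\]

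The heart of the argument is the two-sided bound $\hat f(X)\le f(X)\le \alpha\,\hat f(X)$. The lower bound is immediate from $\sqrt{w^\kappa}\le f^\kappa$ and the decomposition $f = (1-\kappa)m + \kappa f^\kappa$. For the upper bound I would exploit that $f^\kappa(X)$ is constrained \emph{simultaneously} by the ellipsoid, $f^\kappa(X)\le \beta\sqrt{w^\kappa(X)}$, and by submodularity, $f^\kappa(X)\le \sum_{j\in X}f^\kappa(j) = m(X)$ (using $f^\kappa(j)=f(j)$). Maximizing the ratio $f(X)/\hat f(X) = \big[(1-\kappa)m + \kappa f^\kappa\big]/\big[(1-\kappa)m + \kappa\sqrt{w^\kappa}\big]$ subject to $f^\kappa\le \min(\beta\sqrt{w^\kappa},\,m)$ pins the worst case at the point where the two upper bounds coincide, $m = \beta\sqrt{w^\kappa} = f^\kappa$, and there the ratio equals exactly $\beta/\big(1+(\beta-1)(1-\kappa)\big) = \alpha$. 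This anticorrelation --- a set on which $f^\kappa$ looks modular, so the combinatorial bound is tight, is precisely one on which the ellipsoid is tight --- is what turns the crude factor $\beta$ into the curvature-interpolated $\alpha$, and I expect it to be the delicate step; the naive pointwise estimate that ignores the joint constraint gives only the weaker $1+(\beta-1)\kappa$.

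Given the surrogate, each problem reduces to a now-familiar one. For Problem 1, the EA algorithm returns $\hat X\in\argmin_{X\in\mathcal C}\hat f(X)$; since $\hat f$ is a modular term plus a scaled square root of a modular term, this minimization reduces to a small number of linear minimizations over $\mathcal C$ and is therefore tractable for the constraints of interest. The sandwich then gives $f(\hat X)\le \alpha\hat f(\hat X)\le \alpha\hat f(X^*)\le \alpha f(X^*)$, establishing the factor $\alpha$. For Problems 2 and 3 I would replace $f$ by $\hat f$ and solve the resulting surrogate submodular cover and submodular knapsack problems with the greedy routines that are $(1-1/e)$-approximate; the $(1-1/e)$ controls the coverage side (giving $\rho = 1-1/e$), while the cost sandwich on $\hat f$ versus $f$ transfers the factor $\alpha$ to the cost side (giving $\sigma = \alpha$), exactly as in the cited analyses.

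Finally, for Problem 4 I would again substitute $\hat f$ and apply the ratio subroutine: the greedy algorithm of \cite{bai2016algorithms} solves $\min_{X}\hat f(X)/g(X)$ to within $e/(e-1)$, and composing this with the $\alpha$-approximation of $f$ by $\hat f$ yields the claimed $e\alpha/(e-1)$ bound for $f/g$. Throughout, the per-problem steps are bookkeeping once the surrogate and its two-sided guarantee are in place; the only genuinely new work is the curvature-refined approximation of the second paragraph, so the main obstacle is carrying out that ratio maximization carefully enough to obtain the tight interpolation constant $\alpha$ rather than the loose $\beta$.
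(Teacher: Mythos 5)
Your proof is correct and follows essentially the same route as the paper, which states this lemma without proof (borrowing it from the cited references) after alluding to exactly your construction as the ``simple trick'' providing a curvature-dependent approximation: the curve-normalized decomposition $f = (1-\kappa)m + \kappa f^\kappa$ with the Goemans et al.\ ellipsoidal approximation applied to the unit-curvature part $f^\kappa$ is the standard argument in those references. Your key step is also right: maximizing the ratio under the joint constraints $f^\kappa \le \beta\sqrt{w^\kappa}$ and $f^\kappa \le m$ (rather than either bound alone) gives exactly $\beta/\bigl(1+(\beta-1)(1-\kappa)\bigr) = \alpha$ with $\beta = O(\sqrt{n}\log n)$, which is the claimed curvature-interpolated factor.
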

The Ellipsoidal Approximation obtains the tightest bounds for Problems 1-4~\cite{nipssubcons2013,curvaturemin,goemans2009approximating,goel2009optimal,jegelka2011-inference-gen-graph-cuts,bai2016algorithms}. This is again for the general class of submodular functions and the worst case factor of $O(\sqrt{n})$ is quite discouraging. This algorithm, however is very expensive computationally, and is not practical for solving machine learning applications~\cite{rkiyersemiframework2013}.

\section{Improved Algorithms for Low-rank sums of concave-modular functions} 
\label{sec:impr-algor-low}

\begin{figure}
\centering
\includegraphics[width = 0.35\textwidth]{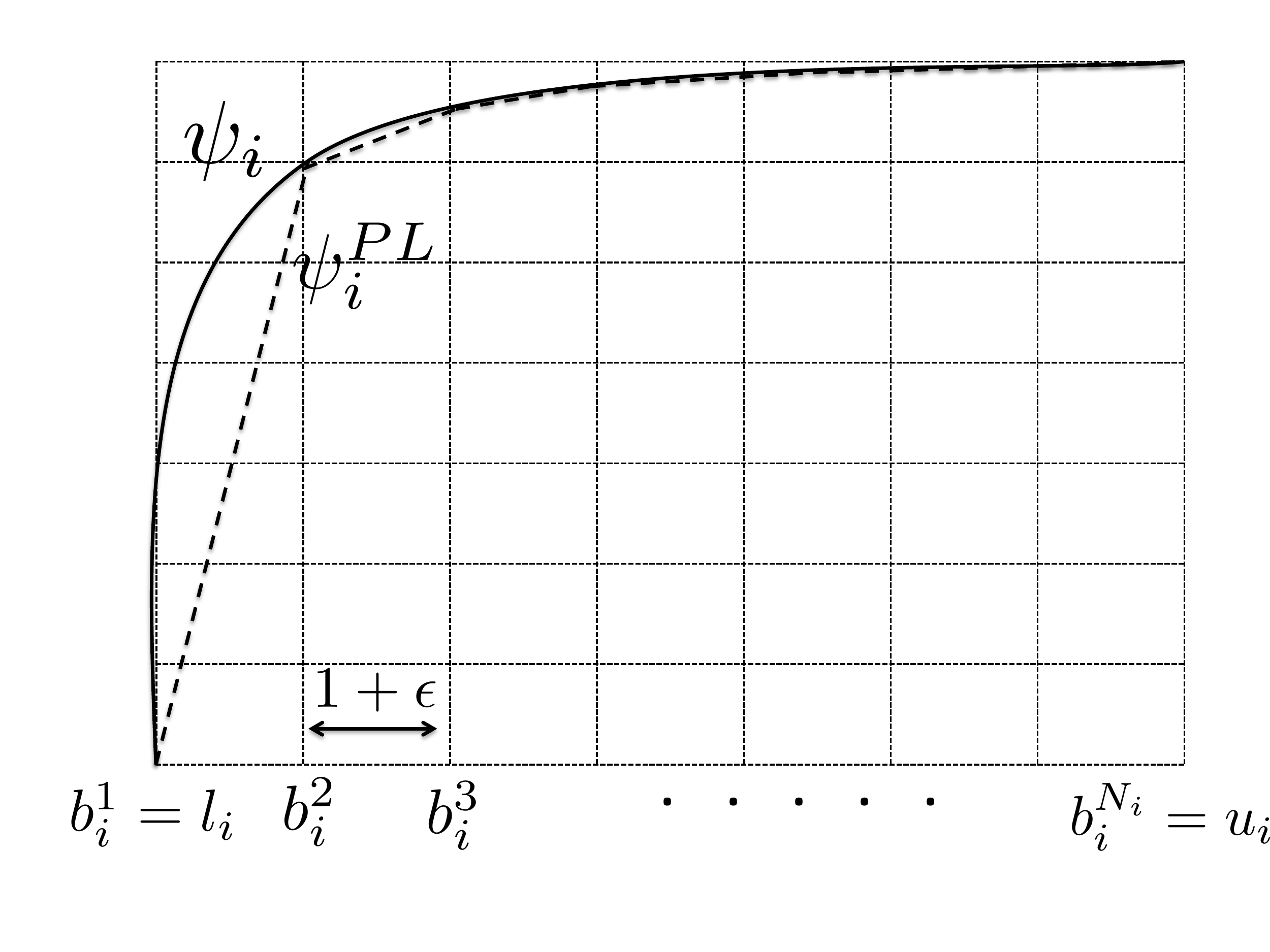}
\caption{Visualizing $\psi^{PL}_i$ and $\psi_i$.}
\label{fig1}
\end{figure}

Our main new results are that we can achieve a fully polynomial
time approximation scheme for Problem 1, and constant factor
approximation guarantees for Problems 2, 3 and 4 when the cost function
$f$ is a low rank sum of concave over modular functions
(Theorem~\ref{PLAguarantee}). Our
techniques build on recent methods used for minimizing quasi-concave
functions over solvable polytopes~\cite{nikolova2010approximation,mittal2013fptas,goyal2013fptas,kelner2007hardness}.

Assume the concave functions $\psi_i$'s are monotone functions, i.e., $\psi_i(y) \leq \psi_i(y^{\prime}), \forall y \leq y^{\prime}$. We also assume that for all $i$, $\psi_i(ky) \leq k^c \psi_i(y)$ for $k \geq 1, y \geq 0$ and some constant $c$. The second assumption holds for a number of concave functions, including $\psi_i(x) = x^a, a \in (0, 1)$, $\psi_i(x) = \log(1+ x)$ and $\psi_i(x) = \min(x, a)$. \looseness-1

\begin{figure}
\vspace{-2.9ex}
\includegraphics[width = 0.35\textwidth]{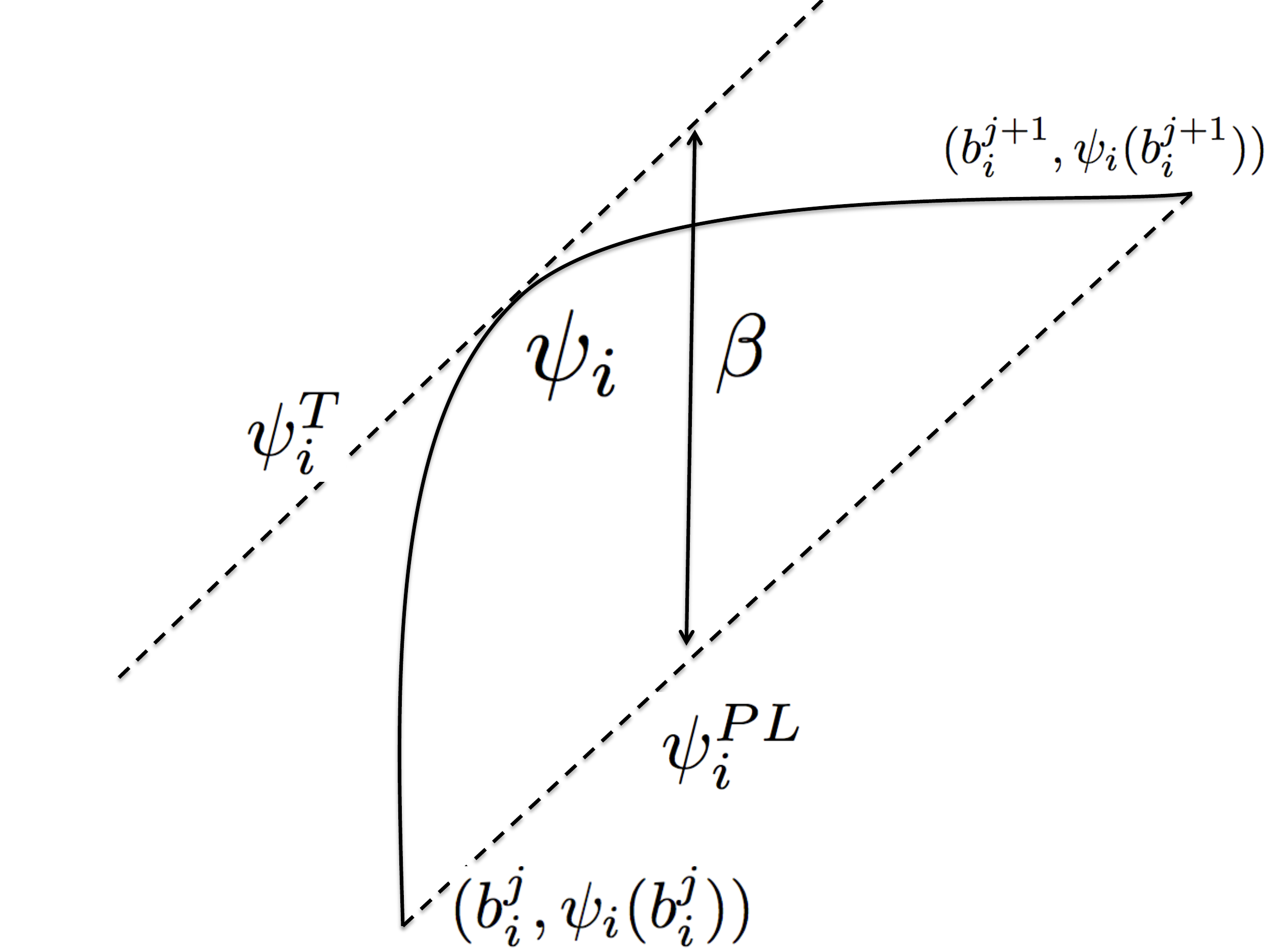}
\vspace{-2ex}
\caption{Showing $\psi_i^T$ and $\psi_i^{PL}$.}
\label{fig2}
\end{figure}

The main idea of this approach is to replace the concave functions $\psi_i$'s by piece-wise linear approximations $\psi_i^{PL}(x)$. We define an approximation of $f(X)$ as $f^{PL}(X)$ defined as $f^{PL}(X) = \sum_{i = 1}^k \psi_i^{PL}(w_i(X))$. We then optimize this piece-wise linear approximation function, and the approximation factor comes based on the tightness of this piece-wise linear approximation. We call this procedure the \emph{piece-wise linear approximation based algorithm} (PLA).

We compute this approximation as follows. In the case of Problem 1, compute $l_i = \min\{w_i(X) | X \in \mathcal C\}$ and $u_i = \max\{w_i(X) | X \in \mathcal C\}$ for each $i = 1, 2, \cdots, k$. Both these computations are linear cost problems and are polynomial time for most constraints. In case these are NP hard for Problem 1, or in the case of Problems 2, 3 and 4, we set $l_i = \min\{w_i(j), j \in V: w_i(j) > 0\}$ and $u_i = w_i(V)$. Then divide the range $[l_i, u_i]$ into pieces with breakpoints $b_i^1, b_i^2, \cdots, b_i^{N_i}$ such that $b_i^1 = l_i$, $b_i^2 = l_i (1 + \epsilon)$, $b_i^3 = l_i (1 + \epsilon)^2$ and so on, for any $\epsilon > 0$. It is easy to see that $N_i = \log_{1+\epsilon} u_i/l_i \approx \log(u_i/l_i)/\epsilon$. The precision $\epsilon$ defines the fineness of the points, and the quality of the approximation.

For all $i = 1, 2, \cdots, k$, define the piece-wise linear function $\psi^{PL}_i$, via the breakpoints $b_i^1, b_i^2, \cdots, b_i^{N_i}$. A visualization of this is shown in Figure~\ref{fig1}, where the dotted lines are the piece-wise approximation, while the solid curve is the concave function $\psi_i$. We first show that the function $f^{PL}$ approximates the function $f$ within a factor of $1 + \epsilon$.

\begin{lemma}
The piece-wise linear function $f^{PL}$ defined with a precision $\epsilon^{\prime}$ satisfies,
\begin{align}
f^{PL}(X) \leq f(X) \leq (1 + \epsilon^{\prime})^c f^{PL}(X) = (1 + \epsilon) f^{PL}(X)
\end{align}
where $c$ is a constant such that $\psi_i(ky) \leq k^c \psi_i(y)$ for $k \geq 1, y \geq 0$ for all $i = 1, 2, \cdots, k$.
\end{lemma}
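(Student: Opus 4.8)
The plan is to prove the two inequalities separately, in each case reducing everything to a per-component, per-interval argument on a single concave function $\psi_i$ and then summing over $i = 1, \ldots, k$.

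First, for the lower bound $f^{PL}(X) \le f(X)$, I would invoke concavity directly. By construction each $\psi_i^{PL}$ is the piecewise-linear interpolant that agrees with $\psi_i$ at the breakpoints $b_i^1, \ldots, b_i^{N_i}$ and is linear in between. Since $\psi_i$ is concave, every chord connecting two points on its graph lies weakly below the graph, so $\psi_i^{PL}(x) \le \psi_i(x)$ for all $x \in [l_i, u_i]$. Evaluating at $x = w_i(X)$ and summing over $i$ gives $f^{PL}(X) = \sum_i \psi_i^{PL}(w_i(X)) \le \sum_i \psi_i(w_i(X)) = f(X)$.

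For the upper bound I would argue interval by interval. Fix $i$ and suppose $w_i(X)$ lies in a piece $[b_i^j, b_i^{j+1}]$, recalling that consecutive breakpoints satisfy $b_i^{j+1} = (1+\epsilon')b_i^j$ by the geometric spacing. Because $\psi_i$ is monotone and $\psi_i^{PL}$ interpolates it, $\psi_i^{PL}$ is monotone too, so $\psi_i^{PL}(w_i(X)) \ge \psi_i^{PL}(b_i^j) = \psi_i(b_i^j)$. On the other side, monotonicity of $\psi_i$ gives $\psi_i(w_i(X)) \le \psi_i(b_i^{j+1}) = \psi_i((1+\epsilon')b_i^j)$, and the growth assumption $\psi_i(ky)\le k^c \psi_i(y)$ with $k = 1+\epsilon'$ and $y = b_i^j$ yields $\psi_i((1+\epsilon')b_i^j) \le (1+\epsilon')^c \psi_i(b_i^j)$. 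Chaining these, $\psi_i(w_i(X)) \le (1+\epsilon')^c \psi_i(b_i^j) \le (1+\epsilon')^c \psi_i^{PL}(w_i(X))$. Summing over $i$ gives $f(X) \le (1+\epsilon')^c f^{PL}(X)$, and setting $\epsilon = (1+\epsilon')^c - 1$ finishes the relabeling.

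The technical points I would still need to clean up are the boundary cases, namely confirming that $w_i(X)$ always lands inside $[l_i, u_i]$ so that some interval $[b_i^j, b_i^{j+1}]$ contains it. For Problem 1 this is immediate, since $l_i$ and $u_i$ are the exact min and max of $w_i$ over the feasible set $\mathcal C$; for Problems 2--4 it follows from modularity of $w_i$ together with nonnegativity, as any relevant $X$ with $w_i(X) > 0$ satisfies $w_i(X) \ge \min_{j} w_i(j) = l_i$ and $w_i(X) \le w_i(V) = u_i$. The degenerate case $w_i(X) = 0$ is absorbed by $\psi_i(0) = 0$, which makes both sides vanish in that component. I expect the only genuine subtlety --- the main obstacle --- to lie in the upper bound, specifically the step that trades the interpolated value $\psi_i^{PL}(w_i(X))$ for the cheaper left-endpoint value $\psi_i(b_i^j)$ and then controls the over-estimate through the multiplicative growth assumption alone, rather than through any smoothness or derivative bound on $\psi_i$.
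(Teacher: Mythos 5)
Your proof is correct, and its upper-bound argument takes a genuinely different (and more elementary) route than the paper's. The paper handles the interval $[b_i^j, b_i^{j+1}]$ via a tangent-line construction: by concavity there is a tangent $\psi_i^T$ parallel to the chord, it upper bounds $\psi_i$, and the vertical offset $\beta$ between the two parallel lines is controlled by $\psi_i(b_i^j) + \beta \le \psi_i(b_i^{j+1})$, giving $\psi_i(y)/\psi_i^{PL}(y) \le 1 + \beta/\psi_i(b_i^j) \le \psi_i(b_i^{j+1})/\psi_i(b_i^j)$. You bypass the tangent entirely: monotonicity of $\psi_i$ gives $\psi_i(y) \le \psi_i(b_i^{j+1})$, and monotonicity of the interpolant (which agrees with $\psi_i$ at breakpoints) gives $\psi_i^{PL}(y) \ge \psi_i(b_i^j)$, so the ratio is bounded by the same endpoint quotient $\psi_i(b_i^{j+1})/\psi_i(b_i^j)$ directly. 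Both routes then finish identically, invoking the geometric spacing $b_i^{j+1} = (1+\epsilon')b_i^j$ and the growth condition $\psi_i(ky) \le k^c\psi_i(y)$ to get the factor $(1+\epsilon')^c$. What each buys: your argument uses nothing beyond monotonicity and is shorter; the paper's tangent bound $1 + \beta/\psi_i(b_i^j)$ is a pointwise tighter intermediate estimate, but since it is ultimately relaxed to the same endpoint ratio, the final constant is identical, so nothing is lost by your simplification. A minor point in your favor: you explicitly address why $w_i(X)$ falls in $[l_i, u_i]$ (and the degenerate case $w_i(X) = 0$ under the normalization $\psi_i(0)=0$), which the paper's proof leaves implicit; conversely, note that your step ``$\psi_i^{PL}$ is monotone'' deserves the one-line justification that each linear piece has nonnegative slope because $\psi_i(b_i^{j+1}) \ge \psi_i(b_i^j)$ --- the same fact the paper also uses tacitly when it writes $\beta/\psi_i^{PL}(y) \le \beta/\psi_i^{PL}(b_i^j)$.
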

\begin{proof}
By the construction of $f^{PL}$, and the concavity of the $\psi_i$s, it is easy to see that $f^{PL}(X) \leq f(X), \forall X \subseteq V$. To show the upper bound, consider a region defined by breakpoints $b_i^j$ and $b_i^{j+1}$. Due to concavity of $\psi_i$, there exists a tangent at some point in $[b_i^j, b_i^{j+1}]$ whose slope equals that of the line connecting $(b_i^j, \psi_i(b_i^j))$ and $(b_i^{j+1}, \psi_i(b_i^{j+1}))$. This tangent line upper bounds the concave function $\psi_i$, and we can denote the corresponding upper bound as $\psi_i^T$. It then holds that $\psi_i^{PL}(y) \leq \psi_i(y) \leq \psi_i^T(y)$. We now show that $\psi_i^T(y) \leq (1 + \epsilon^{\prime})^c \psi_i^{PL}(y)$.  

We now focus on the region $[b_i^j, b_i^{j+1}]$. Let $\beta$ be the constant difference between the two (parallel) lines, in terms of the $y$ value. A visualization of this is shown in Figure~\ref{fig2}. We would like to give a worst case bound on $\psi^T_i(y)/\psi^{PL}_i(y), \forall y \in [b_i^j, b_i^{j+1}]$. Notice that $\psi^T_i(y)/\psi^{PL}_i(y) = 1 + \beta/\psi^{PL}_i(y) \leq 1 + \beta/\psi^{PL}_i(b^j_i) = 1 + \beta/\psi_i(b^j_i) \leq \psi_i(b^{j+1}_i)/\psi_i(b^j_i)$. The last inequality holds since $\psi_i(b^j_i) + \beta \leq \psi_i(b^{j+1}_i)$, and the second last one holds since $b^j_i$ is a break point.

Moreover, $b^{j+1}_i = b^j_i (1 + \epsilon^{\prime})$ and hence $\psi_i(b^{j+1}_i)/\psi_i(b^j_i) \leq \psi_i((1 + \epsilon^{\prime})b^j_i)/\psi_i(b^j_i) \leq (1 + \epsilon^{\prime})^c = 1 + \epsilon$.
\end{proof}

We now show how we can exactly solve Problems 1, 2 and 3 using the
cost function $f^{PL}$.  Let $s^j_i$ denote the slopes of the
piece-wise linear functions -- in other words, $s^j_i =
[\psi_i(b^{j+1}_i) - \psi_i(b^j_i)]/[b^{j+1}_i - b^j_i]$. Also, we
denote $c^j_i$ as the corresponding intercepts. The functions
$\psi_i^{PL}$ are characterized by the pairs $\{(s^1_i, c^1_i),
(s^2_i, c^2_i), \cdots, s^{N_i}_i, c^{N_i}_i)\}$, and $\psi_i^{PL}(y)
= s^j_i.y + c^j_i, \forall y \in [b^j_i, b^{j+1}_i]$. We then 
consider the $\prod_{i = 1}^k N_i$ different possibilities of the
cross-terms. Define $J = [j_1, j_2, \cdots, j_k]$ as a vector such that
$J \in [1, N_1] \times [1, N_2] \times \cdots \times [1, N_k]$.

In the case of Problem 1, PLA solves a set of optimization problems,
\begin{align}\label{pla1}
\hat{X}_J = &\argmin\{\sum_{i = 1}^k s^{j_i}_i w_i(X) + c^{j_i}_i \,|\, X \in \mathcal C\}, \nonumber \\ &\forall J \in [1, N_1] \times [1, N_2] \times \cdots \times [1, N_k].
\end{align}
The final solution $\hat{X}$ is the minimum among the ones above. For problem 2, we consider the set of problems,
\begin{align}\label{pla2}
\hat{X}_J = &\argmin\{\sum_{i = 1}^k s^{j_i}_i w_i(X) + c^{j_i}_i \,|\, g(X) \geq c\}, \nonumber \\ &\forall J \in [1, N_1] \times [1, N_2] \times \cdots \times [1, N_k],
\end{align}
and again set $\hat{X}$ is the minimum among the $\hat{X}_J$'s above. Similarly, for Problem 3, we solve,
\begin{align}\label{pla3}
\hat{X}_J = &\argmax\{g(X) \,|\, \sum_{i = 1}^k s^{j_i}_i w_i(X) + c^{j_i}_i \leq b\}, \nonumber \\
&\forall J \in [1, N_1] \times [1, N_2] \times \cdots \times [1, N_k]
\end{align}
We set $\hat{X}$ corresponding to the set with the largest value of $g(\hat{X}_J)$.
Finally, for Problem 4, we have:
\begin{align}\label{pla4}
\hat{X}_J = &\argmin\{\frac{\sum_{i = 1}^k s^{j_i}_i w_i(X) + c^{j_i}_i}{g(X)} \}, \nonumber \\ &\forall J \in [1, N_1] \times [1, N_2] \times \cdots \times [1, N_k].
\end{align}

Our main result is that these simple procedures provide improved guarantees for all three problems.
\begin{theorem}\label{PLAguarantee}
PLA achieves an approximation factor of $1+\epsilon$ for Problem 1 as long as a linear function can be exactly minimized under $\mathcal C$. PLA also achieves a bi-criterion approximation factor satisfying $\sigma = 1 + \epsilon$ and $\rho = 1 - 1/e$ for Problems 2 and 3. PLA also achieves a non bicriterion approximation factor of $(1 + \epsilon) \log g(V)$ for Problem 2. PLA also achieves an approximation factor of $e(1 + \epsilon)/(e-1)$ for Problem 4. The worst case complexity of PLA is $\prod_{i = 1}^k \log(u_i/l_i) (\frac{1}{\epsilon})^k T = O((\frac{1}{\epsilon})^k T)$, where $T$ is the complexity of Problems 1-4, with a linear cost function $f$.
\end{theorem}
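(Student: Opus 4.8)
The plan is to reduce the minimization of $f^{PL}$ under each of Problems 1--4 to a family of $\prod_{i=1}^k N_i$ subproblems carrying a purely \emph{linear} cost, to invoke the already-established exact or approximate solvers for those linear-cost subproblems, and finally to transfer the resulting guarantee back to the true cost $f$ through the sandwich bound $f^{PL}(X)\le f(X)\le (1+\epsilon)f^{PL}(X)$ of the preceding Lemma. The structural fact that drives this is an envelope representation of $f^{PL}$: each $\psi_i^{PL}$ is concave and piecewise linear, hence equals the lower envelope of the extensions of its pieces, $\psi_i^{PL}(y)=\min_{1\le j\le N_i}(s_i^j y + c_i^j)$. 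Writing $L_J(X)=\sum_{i=1}^k (s_i^{j_i} w_i(X)+c_i^{j_i})$ for a cell index $J=[j_1,\dots,j_k]$ and summing over $i$ yields $f^{PL}(X)=\min_J L_J(X)$, where $L_J(X)\ge f^{PL}(X)$ for every $J$, with equality exactly when each $w_i(X)$ lies in its $j_i$-th segment. Consequently minimization decomposes as $\min_X f^{PL}(X)=\min_J \min_X L_J(X)$, whose inner problems are precisely \eqref{pla1}--\eqref{pla4}; this is why the algorithm enumerates all $J$ and retains the best.

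For Problem 1 a linear cost can be minimized exactly over $\mathcal C$, so solving \eqref{pla1} over all cells and keeping the minimizer $\hat X$ returns, by the decomposition above, an exact minimizer of $f^{PL}$ over $\mathcal C$. With $X^*$ the true optimizer, chaining the sandwich bound with $f^{PL}(\hat X)\le f^{PL}(X^*)$ gives $f(\hat X)\le (1+\epsilon)f^{PL}(\hat X)\le (1+\epsilon)f^{PL}(X^*)\le (1+\epsilon)f(X^*)$, which is the claimed factor.

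Problems 2, 3 and 4 follow the same template, the only change being that each inner linear-cost problem is solved approximately by an off-the-shelf routine: submodular cover for \eqref{pla2} and submodular knapsack for \eqref{pla3} (both $1-1/e$ approximable), and the greedy ratio algorithm for \eqref{pla4} ($e/(e-1)$ approximable). The key step is to anchor the analysis at the cell $J^*$ containing $X^*$, where $L_{J^*}(X^*)=f^{PL}(X^*)\le f(X^*)$: the subproblem for $J^*$ therefore sees $X^*$ at no more than its true cost, so its approximate solution $\hat X_{J^*}$ inherits the $1-1/e$ (resp.\ $e/(e-1)$) guarantee measured against $X^*$. Since every returned $\hat X_J$ satisfies $f(\hat X_J)\le (1+\epsilon)f^{PL}(\hat X_J)\le (1+\epsilon)L_J(\hat X_J)$, the extra $1+\epsilon$ multiplies cleanly into the cost factor $\sigma$ (Problems 2, 3) or into the ratio bound (Problem 4), while the coverage guarantee $\rho=1-1/e$ carries over untouched; the non-bicriterion $(1+\epsilon)\log g(V)$ bound for Problem 2 results from substituting the Wolsey-style $\log$-approximate cover routine for the bicriterion one. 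The running time is the number of cells $\prod_{i=1}^k N_i$ times the cost $T$ of one linear subproblem, and $N_i\approx \log(u_i/l_i)/\epsilon$ gives the stated $\prod_i \log(u_i/l_i)(1/\epsilon)^k T = O((1/\epsilon)^k T)$ for constant $k$.

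The main obstacle is the envelope decomposition together with the inequality bookkeeping it forces: $L_J\ge f^{PL}$ holds everywhere but is tight only inside cell $J$, so the argument must be anchored at the specific cell $J^*$ containing $X^*$ to avoid overestimating the optimum, whereas the global bound $L_J\ge f^{PL}$ is exactly what legitimately transfers each cell's returned solution back to $f$.
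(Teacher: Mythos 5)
Your proposal is correct and follows essentially the same route as the paper: the concave-envelope identity $f^{PL}(X)=\min_J L_J(X)$, interchange of minimizations to reduce to the family of linear-cost subproblems \eqref{pla1}--\eqref{pla4}, transfer back to $f$ via the sandwich bound, and anchoring at the cell $J^*$ containing $X^*$ when the inner solver is only approximate. If anything, you are more explicit than the paper, which spells out the $J^*$-anchoring argument only for Problem 3 and otherwise defers the composition of the $(1+\epsilon)$ factor with the inner-solver guarantees to cited results.
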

\begin{proof}
We first show that PLA solves Problems 1-4 with the surrogate function $f^{PL}$. Note that with the piece-wise linear approximation, Problem 1 becomes $\min_{X \in \mathcal C} \sum_{i = 1}^k \psi^{PL}_i(w_i(X)) = \min_{X \in \mathcal C} \sum_{i = 1}^k \min\{s^{1}_i w_i(X) + c^{1}_i, s^{2}_i w_i(X) + c^{2}_i, \cdots, s^{N_i}_i w_i(X) + c^{N_i}_i\}$. This holds since $\psi^{PL}_i(y) = \min\{s^{1}_i y + c^{1}_i, s^{2}_i y + c^{2}_i, \cdots, s^{N_i}_i y + c^{N_i}_i\}$, due to the concavity of $\psi_i$'s. We can then rewrite this as $\sum_{i = 1}^k \psi^{PL}_i(w_i(X)) = \min_{J \in \mathcal J} \sum_{i = 1}^k s^{j_i}_i w_i(X) + c^{j_i}_i$, where $\mathcal J = [1, N_1] \times [1, N_2] \times \cdots \times [1, N_k]$. Combining these facts, we can rewrite the problem as $\min_{X \in \mathcal C} \min_{J \in \mathcal J} \sum_{i = 1}^k s^{j_i}_i w_i(X) + c^{j_i}_i$, which after interchanging the $\min$'s becomes $\min_{J \in \mathcal J} \min_{X \in \mathcal C} \sum_{i = 1}^k s^{j_i}_i w_i(X) + c^{j_i}_i$. This is exactly Eq.~\eqref{pla1}.

The algorithm for Problem 2 (Eq.~\eqref{pla2}) is basically the same as that of Problem 1, since it is a special case. Similarly we can write Problem 4 as $\min_{X \in \mathcal C} \min_{J \in \mathcal J} \sum_{i = 1}^k (s^{j_i}_i w_i(X) + c^{j_i}_i)/g(X)$ which is equivalent to $\min_{J \in \mathcal J} \min_{X \in \mathcal C} \sum_{i = 1}^k (s^{j_i}_i w_i(X) + c^{j_i}_i)/g(X)$, which becomes equation~\eqref{pla4}. Equations~\eqref{pla1}, \eqref{pla2} and~\eqref{pla3} each become instances of Problems 1, 2 and 4 with $f$ being modular and the approximation guarantees follow directly from~\cite{nips2013extendedvcurv,nipssubcons2013,bai2016algorithms}.

To deal with Problem 3, we use the fact that $f^{PL}(X) = \min_{J \in \mathcal J} \sum_{i = 1}^k (s^{j_i}_i w_i(X) + c^{j_i}_i)$, and hence we have the constraint, $\{\min_{J \in \mathcal J} \sum_{i = 1}^k s^{j_i}_i w_i(X) + c^{j_i}_i \leq b\}$. First we show that $X_J$ is feasible for all $ \in \mathcal J$. This follows easily from the fact that if for any $X$, $w_J(X) \leq b$, it holds that $\min_{J \in \mathcal J} w_J(X) \leq b\}$. Next, let $X^*$ be the optimal solution of Problem 3, and let $J^*$ be such that $\sum_{i = 1}^k s^{j^*_i}_i w_i(X^*) + c^{j^*_i}_i = f^{PL}(X^*)$. Note that our algorithm covers $J^*$ and hence $g(\hat{X}) \geq g(\hat{X}_{J^*}) \geq (1 - 1/e)g(X^*)$, where $1 - 1/e$ is the approximation factor of the submodular knapsack problem~\cite{sviridenko2004note}. Note that the approximation factor of Problem 1 with $f^{PL}$ is $1$ assuming $\mathcal C$ admits an exact solution with linear cost functions, while the factor for problem $2$ is $\log g(V)$ for non-bicriterion algorithms, and a bicriterion factor of $[1, 1 - 1/e]$ with a bi-criterion algorithm~\cite{wolsey1982analysis,nipssubcons2013}.
\end{proof}

Results similar to Theorem~\ref{PLAguarantee} have been shown for a generalization of Problem 1, which asks for constrained optimization of low rank functions~\cite{nikolova2010approximation,mittal2013fptas,goyal2013fptas,kelner2007hardness,kohliOJ13,fujishige1999minimizing}. This problem in general is not a combinatorial optimization problem. However, when the functions are quasi-concave, the optimum lies on an extreme point, and hence, can be posed as a combinatorial optimization problem. Problem 1 asks for optimizing a specific subclass of concave (and hence quasi-concave) functions.  \cite{nikolova2010approximation,goyal2013fptas,kelner2007hardness} focus on the class of low rank quasi-concave functions, while \cite{mittal2013fptas} consider the general class of low rank functions. While their algorithms apply to our class of functions as well, their approach while being more general, is also more complicated and involved.  \cite{kohliOJ13} also consider a special case of Problem 1, with $\mathcal C$ being the family of cuts (i.e., the cooperative cut problem). Interestingly, they suggest an algorithm that is identical to PLA when $f$ is a (low rank) sum of truncations 
(i.e., $\psi_i(x) = \min(x, a)$).
For general sums of low-rank concave functions, they resort to the algorithms of~\cite{mittal2013fptas, goyal2013fptas}. We provide a generic algorithm, which not only works for a much large class of constraints and functions, but also extends to the Problems 2, 3 and 4. Moreover, it is easy to see that our algorithms would also work for the more general problem of minimizing low rank sums of concave functions, over a solvable polytope.\looseness-1

Note that the complexity of PLA is polynomial in $\frac{1}{\epsilon}$, but exponential in $k$. Hence this makes sense only if $k$ is a constant or is $O(\log n)$. If $k$ is a constant (with respect to $n$), PLA is a fully polynomial time approximation scheme (FPTAS)~\cite{vazirani2004approximation}. If $k = O(\log n)$, then PLA is a polynomial-time approximation scheme (PTAS). This assumption is reasonable for many of the applications of Problems 1-4 (see details of this in the experiments section). Moreover, there are a number of ways one can speed up PLA. A very simple observation is that PLA is amenable to a distributive implementation via Map-Reduce. In particular, let $N = O(\frac{1}{\epsilon}^k)$ denote the total number of computations of PLA (i.e., this is the number of times one performs an instance of Problems 1-4 with a modular function). All these can be performed in parallel on $m$ processing
systems. We output the best from each system to a central processor, which finds the optimal amongst these. The complexity of this distributive procedure is $O(NT/m + m)$, (where $T$ is the complexity of using a modular function in the place of $f$ in Problems 1-4), which improves the overall complexity by a factor of $m$.\JTR{check this, as originally
$m$ was not defined. Also, $T$ is currently not defined but should be.}

In addition, we can also provide early stopping criterion and heuristics for speeding up PLA. One strategy of implementing PLA, is to start with $j_i = 1, \forall i = 1, 2, \cdots, k$, and incrementally increase $j_i$ in a coordinate ascent fashion. The following lemma gives a sufficient condition for stopping PLA.
\begin{lemma}
Let $J = [j_1, j_2, \cdots, j_k]$ be such that the corresponding solution $\hat{X}_J$ satisfies $w_i(X_J) \in [b_i^{j_i}, b_i^{j_i + 1}], \forall i$. Then $\hat{X}_J$ is the (near) optimal solution for Problems 1, 2 and 3.
\end{lemma}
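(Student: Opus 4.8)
The plan is to exploit the fact, established in the proof of Theorem~\ref{PLAguarantee}, that $f^{PL}$ is the lower envelope of the affine surrogates: writing $\phi_J(X) = \sum_{i=1}^k (s^{j_i}_i w_i(X) + c^{j_i}_i)$, concavity of each $\psi_i$ gives $f^{PL}(X) = \min_{J \in \mathcal J} \phi_J(X)$, so that $\phi_J(X) \geq f^{PL}(X)$ for every $J$ and every $X$, with equality precisely when $w_i(X) \in [b^{j_i}_i, b^{j_i+1}_i]$ for all $i$ (the $j_i$-th piece is then the active, minimizing piece in coordinate $i$). First I would use the hypothesis $w_i(\hat X_J) \in [b^{j_i}_i, b^{j_i+1}_i]$ to place $\hat X_J$ in exactly this equality region, which yields $f^{PL}(\hat X_J) = \phi_J(\hat X_J)$. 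Since $\hat X_J = \argmin_{X \in \mathcal C} \phi_J(X)$ by construction (Eq.~\eqref{pla1}), this gives the key identity $f^{PL}(\hat X_J) = \min_{X \in \mathcal C} \phi_J(X)$, i.e. $\hat X_J$ already solves, exactly, the linearized problem whose slopes are the ones it activates.

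Next I would convert this into the stopping guarantee. Reading $s^J = (s^{j_1}_1, \dots, s^{j_k}_k)$ as a supergradient of the concave $f^{PL}$ at $\hat X_J$, the supergradient inequality gives, for every feasible $X$, $f^{PL}(X) \leq f^{PL}(\hat X_J) + (\phi_J(X) - \phi_J(\hat X_J))$, where the bracket is nonnegative because $\hat X_J$ minimizes $\phi_J$. Thus $\hat X_J$ is a stationary point of the surrogate: refining any of the pieces currently activated by $\hat X_J$ cannot change the active linearization, so the coordinate-ascent schedule over $J$ has converged and there is nothing to gain by continuing it, which is exactly the early-stopping certificate. Combining the identity with the approximation bound $f(X) \leq (1+\epsilon) f^{PL}(X)$ proved above then transfers the estimate to the true cost, $f(\hat X_J) \leq (1+\epsilon) f^{PL}(\hat X_J) = (1+\epsilon)\min_{X \in \mathcal C}\phi_J(X)$, which is the $(1+\epsilon)$ statement for Problem 1. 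For Problems 2 and 3 I would run the same argument with $\mathcal C = \{X : g(X) \geq c\}$ (resp. the knapsack constraint): consistency again makes $\phi_J$ exact at $\hat X_J$, so the submodular-cover / submodular-knapsack analysis underlying Theorem~\ref{PLAguarantee} applies verbatim and delivers the $\rho = 1 - 1/e$ feasibility/coverage guarantee.

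The main obstacle is upgrading this stationarity to genuine global optimality, and it is intrinsic: $f^{PL}$ is concave, so $\min_{X \in \mathcal C} f^{PL}(X)$ is a concave-minimization problem that may possess several stationary points, and the one-sided supergradient inequality above only upper-bounds $f^{PL}(X)$, so it does not by itself certify $f^{PL}(\hat X_J) \leq f^{PL}(X)$ for all feasible $X$. This is precisely why the statement reads ``(near) optimal.'' To obtain exact global optimality I would align $\hat X_J$ with the $J^*$ argument used in the proof of Theorem~\ref{PLAguarantee}: if the active box $\prod_i [b^{j_i}_i, b^{j_i+1}_i]$ of $\hat X_J$ is the same box that contains the weights of the true surrogate minimizer $X^* = \argmin_{X\in\mathcal C} f^{PL}(X)$, then $\phi_J(X^*) = f^{PL}(X^*)$ and the chain $f^{PL}(X^*) \leq f^{PL}(\hat X_J) = \min_{X \in \mathcal C}\phi_J(X) \leq \phi_J(X^*) = f^{PL}(X^*)$ closes to equality. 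The delicate quantity I would have to control in the general case is the slack $\phi_J(X^*) - f^{PL}(X^*)$ when $X^*$ falls in different boxes; absent the shared-box condition this slack need not be small, so only the local, near-optimal conclusion survives, consistent with the stated guarantee.
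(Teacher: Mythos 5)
First, a point of comparison you could not have known: the paper states this lemma \emph{without any proof} --- it appears bare, as justification for the coordinate-ascent stopping heuristic --- so your attempt must be judged on its own merits rather than against an authors' argument. The solid part of your attempt is the first paragraph: the hypothesis $w_i(\hat X_J)\in[b_i^{j_i},b_i^{j_i+1}]$ for all $i$ makes the $j_i$-th chord active in every coordinate, giving the identity $f^{PL}(\hat X_J)=\phi_J(\hat X_J)=\min_{X\in\mathcal C}\phi_J(X)$, where $\phi_J(X)=\sum_{i=1}^k (s^{j_i}_i w_i(X)+c^{j_i}_i)$. The genuine gap is the sentence in your second paragraph asserting that $f(\hat X_J)\le(1+\epsilon)\min_{X\in\mathcal C}\phi_J(X)$ ``is the $(1+\epsilon)$ statement for Problem 1.'' It is not: an approximation guarantee must compare against $f(X^*)$ with $X^*=\argmin_{X\in\mathcal C}f(X)$, and since $\phi_J\ge f^{PL}$ pointwise, one only gets $\min_{\mathcal C}\phi_J\ge\min_{\mathcal C}f^{PL}$ --- the wrong direction --- so $\min_{\mathcal C}\phi_J$ cannot be bounded by $f(X^*)$. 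Your third paragraph concedes exactly this point, so the proposal simultaneously asserts and retracts the result; read as a proof of the stated lemma it is internally inconsistent, and only the local-stationarity reading survives.

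Moreover, the obstacle you flag at the end is not merely a step you failed to complete; it is fatal to the lemma as a global claim, so your instinct was right and you should have trusted it. Concretely: take $k=2$, $\psi_1=\psi_2=\log(1+y)$, ground set $\{p,q\}$ with constraint $\mathcal C=\{\{p\},\{q\}\}$ (choose exactly one element), and weights $w(p)=(a,a)$, $w(q)=(\lambda a,\,1)$ with $\lambda\ge 2+\epsilon$. Then $l_1=a$, $l_2=1$, the point $(a,a)$ lies on breakpoints of both coordinates, and a short computation with the chord slopes near $y=a$ (each roughly $\log(1+\epsilon)/(a\epsilon)$) shows $\phi_J(\{p\})\le\phi_J(\{q\})$ for the box $J$ containing $(a,a)$; hence $\hat X_J=\{p\}$ and the stopping condition fires. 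Yet $f(\{p\})=2\log(1+a)$ while $f(\{q\})=\log(1+\lambda a)+\log 2$, so $f(\hat X_J)/f(X^*)\to 2$ as $a\to\infty$ (and $\to k$ with $k$ coordinates), no matter how small $\epsilon$ is. What the consistency condition actually certifies is only (i) that the coordinate-ascent heuristic has reached a fixed point, and (ii) exact optimality in the special case --- your ``shared box'' condition --- where the minimizer of $f^{PL}$ over $\mathcal C$ happens to fall in the same box $J$. A correct write-up should state one of these weaker conclusions explicitly; the lemma's unqualified ``(near) optimal for Problems 1, 2 and 3'' is not provable because it is false in this generality.
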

The values of $w_i(X_J)$ also suggest the direction of the co-ordinate wise algorithm. For example, if $w_i(X_J) < b_i^{j_i}$, it suggests that the value of $j_i$ be decreased. Similarly, if $w_i(X_J) > b_i^{j_i + 1}$, its a sign that $j_i$ be decreased. In this manner, one can define a greedy like heuristic to implement PLA~\cite{kohliOJ13}, which picks for every coordinate, the slope which increases the objective value the most. Many of these heuristics have been considered in \cite{kohliOJ13} in the case of cuts, and when the function class is low rank sums of truncations. These heuristics are all polynomial in $k$, but are not guaranteed to obtain the optimal solutions. Moreover, in certain cases (for example, the case of cuts), one can do parametric versions, thereby solving a set of related problems simultaneously~\cite{kohliOJ13,fujishige1999minimizing}.

\section{Experiments}
\label{sec:experiments}

We next experimentally evaluate the performance of our methods. The utility of the constrained minimization algorithms for cooperative cuts have been investigated in~\cite{kohliOJ13}. In this paper, we consider the applications of cooperative image matching and sensor placement.

\subsection{Cooperative Image matching}
\label{sec:coop-image-match}

\begin{figure*}
        \centering              \includegraphics[width=0.1\textwidth]{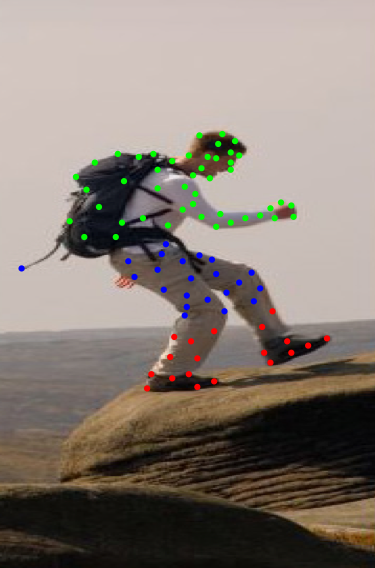}
        ~                 \includegraphics[width=0.2\textwidth]{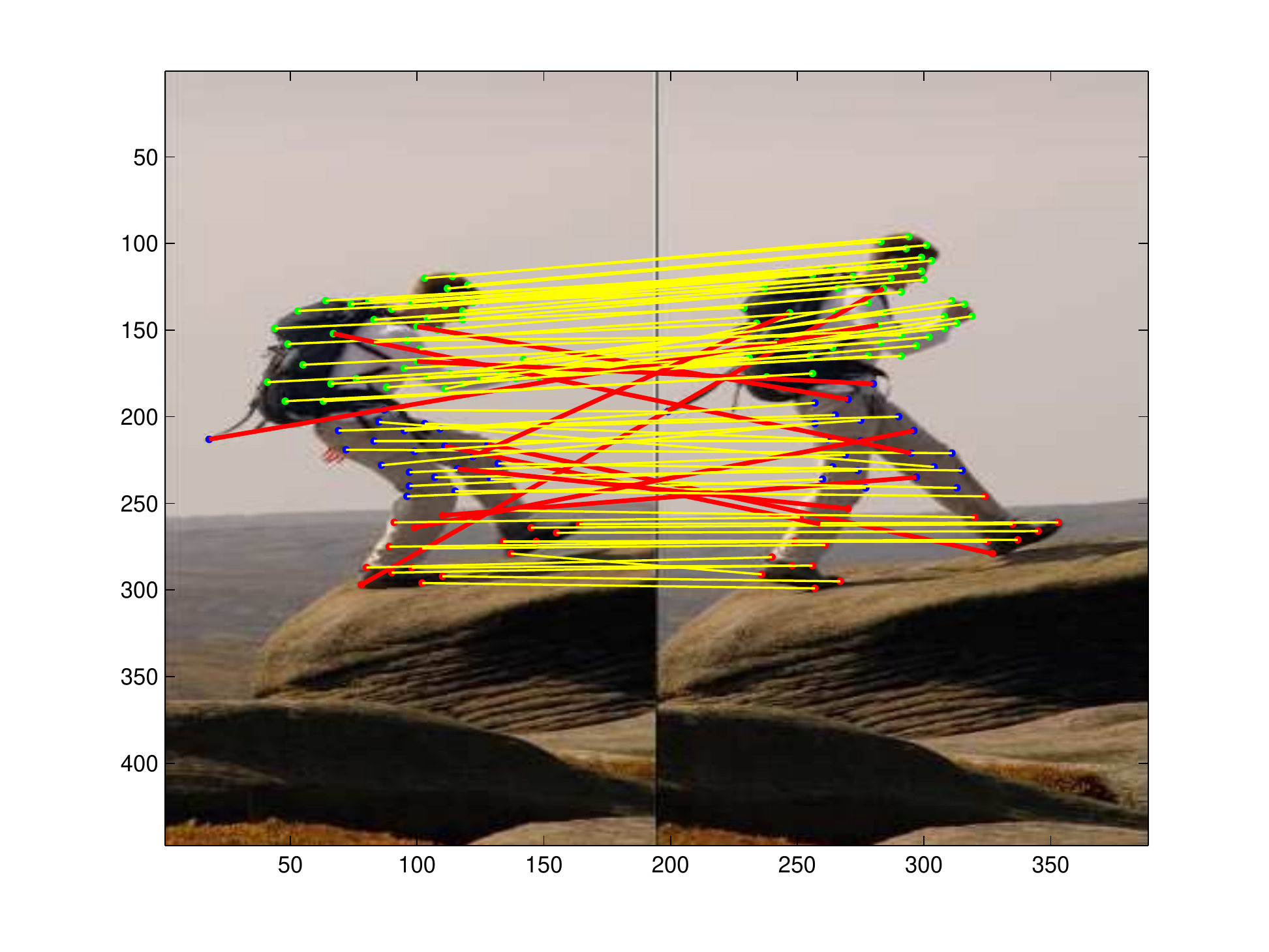}
        ~                 \includegraphics[width=0.2\textwidth]{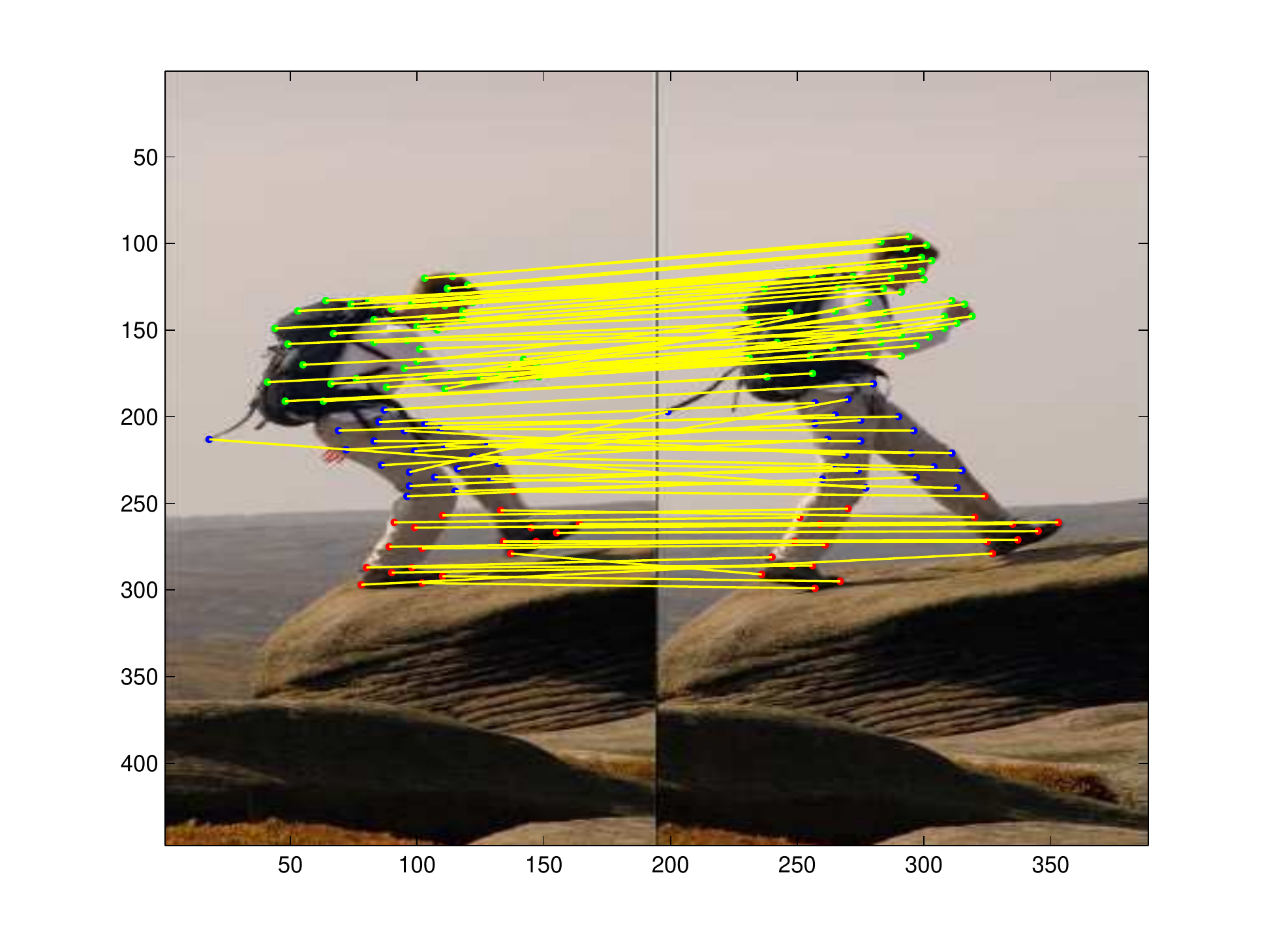}
        ~               \includegraphics[width=0.2\textwidth]{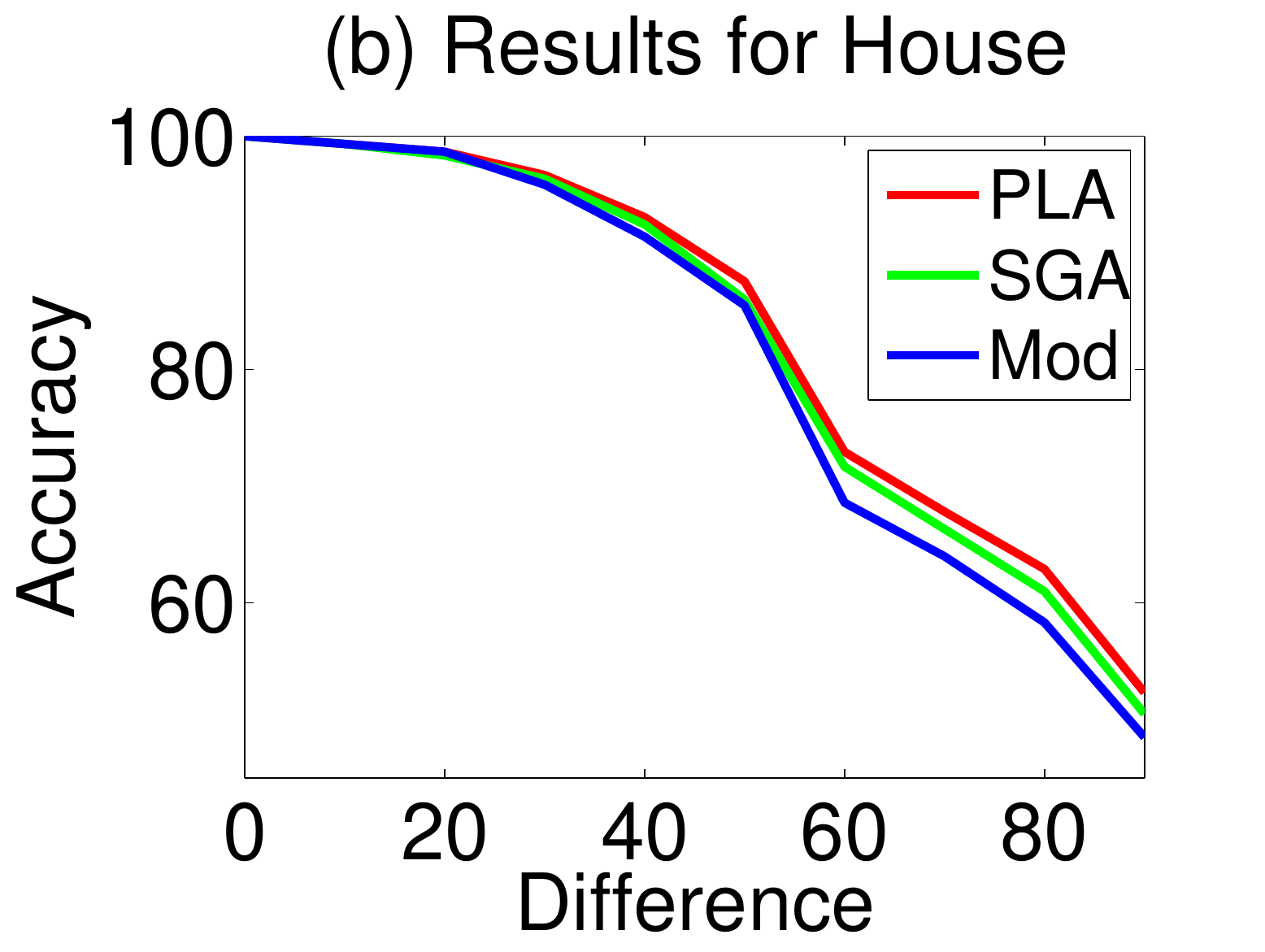}
        ~           \includegraphics[width=0.2\textwidth]{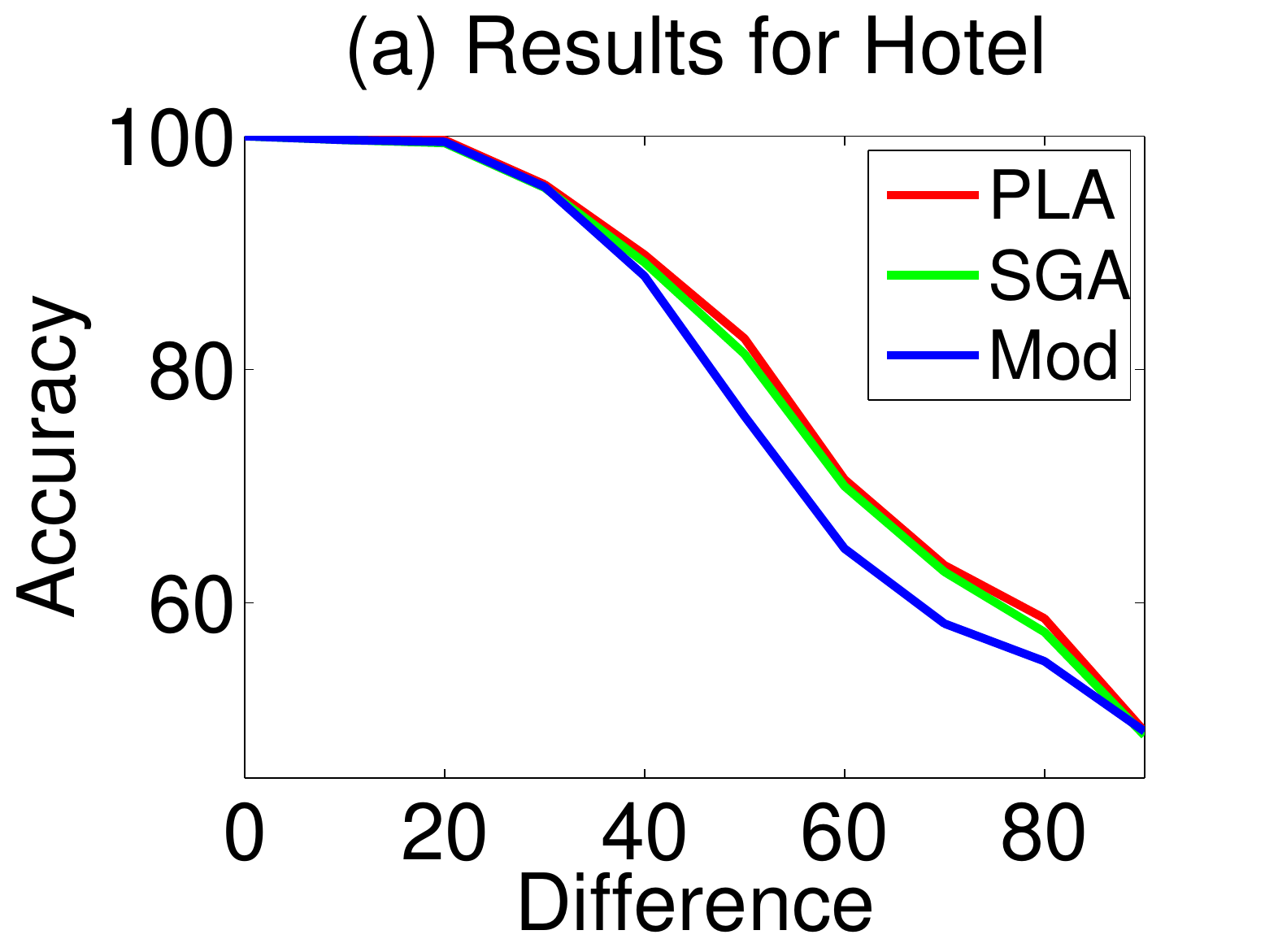}
 \caption{(a) (far left) shows the clustering used in cooperative matching, (b) and (c) show the results with bipartite matching and cooperative matching respectively, and (d) and (e) (far right) give the results on the \href{http://vasc.ri.cmu.edu//idb/html/motion/house/}{House} and \href{http://vasc.ri.cmu.edu//idb/html/motion/hotel/}{Hotel} dataset, 
        showing PLA (this paper)
achieves slightly better than SGA,
and both submodular methods performing better than standard matching (Mod).}
\label{fig:match}
\end{figure*}

The problem of matching key-points in images, also called image
correspondence, is an important problem in computer
vision~\cite{ogale2005shape}. The simplest model for this problem
constructs a matching with linear scores, i.e., a max bipartite
matching~\cite{jegelka2013interactive}, called a \emph{linear
  assignment}. This model does not allow a representation of
interaction between the pixels. For example, we see many obviously
spurious matches in figure~\ref{fig:match}b. Many models try to
capture this, via, for example via quadratic
assignments~\cite{caetano2009learning}. Instead of just looking at the
best linear assignment, the quadratic models try to incorporate
pairwise constraints. This is also called graph matching.


We describe a new and different model here.  First, we cluster
key-points, separately in each of the two images, into $k$
clusters. Figure~\ref{fig:match}a shows a particular clustering of an
image into $k = 3$ groups. The clustering can be performed based on
the pixel color map, or simply the distance of the key-points. That is,
each image has $k$ clusters. Let $\{ V_i^{(1)} \}_{i=1}^k$ and $\{
V_i^{(2)} \}_{i=1}^k$ be the two sets of clusters.  We then compute
the linear assignment problem, letting $\mathcal M \subseteq \mathcal
E$ be the resulting maximum matching. We then partition the edge set $\mathcal
E = \mathcal E_1 \cup \mathcal E_2 \cup \dots \mathcal E_k \cup
\mathcal E'$ where $\mathcal E_i = \mathcal M \cap (V_\ell^{(1)}
\times V_s^{(2)} )$ for $\ell,s \in \{ 1, 2, \dots, k\}$ corresponding
to the $i$'th largest intersection, and $\mathcal E' = \{\mathcal E
\backslash \cup_{i = 1}^k \mathcal E_i\}$ are the remaining edges
either that were not matched or that did not lie within a frequently
associated pair of image key-point clusters. We then
define a submodular function as follows:
\begin{align}\label{coopobj}
f(S) = \sum_{i = 1}^k \psi_i(w(S \cap \mathcal E_i)) + 
w(S \cap \mathcal E'), 
\end{align}
which provides an additional discount to the edges $\{\mathcal
E_i\}_{i = 1}^k$ corresponding to key-points that were frequently
associated in the initial pass. The problem of co-operative matching then becomes an instance of Problem 1 with the submodular function (over the edges) defined above, and a constraint that the edges form a matching. Figures~\ref{fig:match}b and
\ref{fig:match}c shows how the submodular matchings improve over 
the simple bipartite matching, with $k = 3$. The minimum matching
approach obtains many spurious matches between clusters (shown in
red), while the cooperation described above reduces these spurious
matches. The cooperative matching improves the performance over the
modular method on these images by about $20\%$.

We also test the performance of our algorithms on the CMU House and Hotel dataset~\cite{caetano2009learning}. The house dataset has $111$ images, while the hotel dataset has $101$ images. We consider all possible pairs of images, with differences between the two images ranging from $0:10:90$ in both cases. We consider three algorithms: PLA, SGA (both using Equation~\eqref{coopobj}) and the simple modular bipartite matching as a baseline (Mod). Again, we set $k = 3$. 
The results are shown in Figure~\ref{fig:match}(d-e) where 
we observe that PLA and SGA beat Mod by about $3-5\%$ on average. 
Moreover, we also see that PLA, in general, outperforms SGA, thus showing how superior theoretical guarantees translate into better empirical performance. In PLA, we chose $\epsilon$ such that each concave function $\psi_i$ has four break points. We observed, moreover, that setting lower values of $\epsilon$ does not improve the objective value in this application. 
\JTR{Should definitely give the $\epsilon$ used, and also report running times. Ideally, we would have multiple plots for various different $\epsilon$s and
that also show running time as well as accuracy.}
We observe, moreover, that PLA also beats SGA in terms of objective value. We do not compare the ellipsoidal approximation algorithm (EA)~\cite{goemans2009approximating}, mainly because it is too slow to run on real world problems. Moreover, this algorithm has been observed to perform comparably to the much simpler SGA~\cite{rkiyersemiframework2013}. While we considered the simple linear assignment as a baseline for the cooperative matching, it seems possible to embed this cooperation on more involved graph matching models as well. 

\subsection{Sensor Placement}
\label{sec:sensor-placement}

\begin{figure*}
\centering           \includegraphics[width=0.18\textwidth]{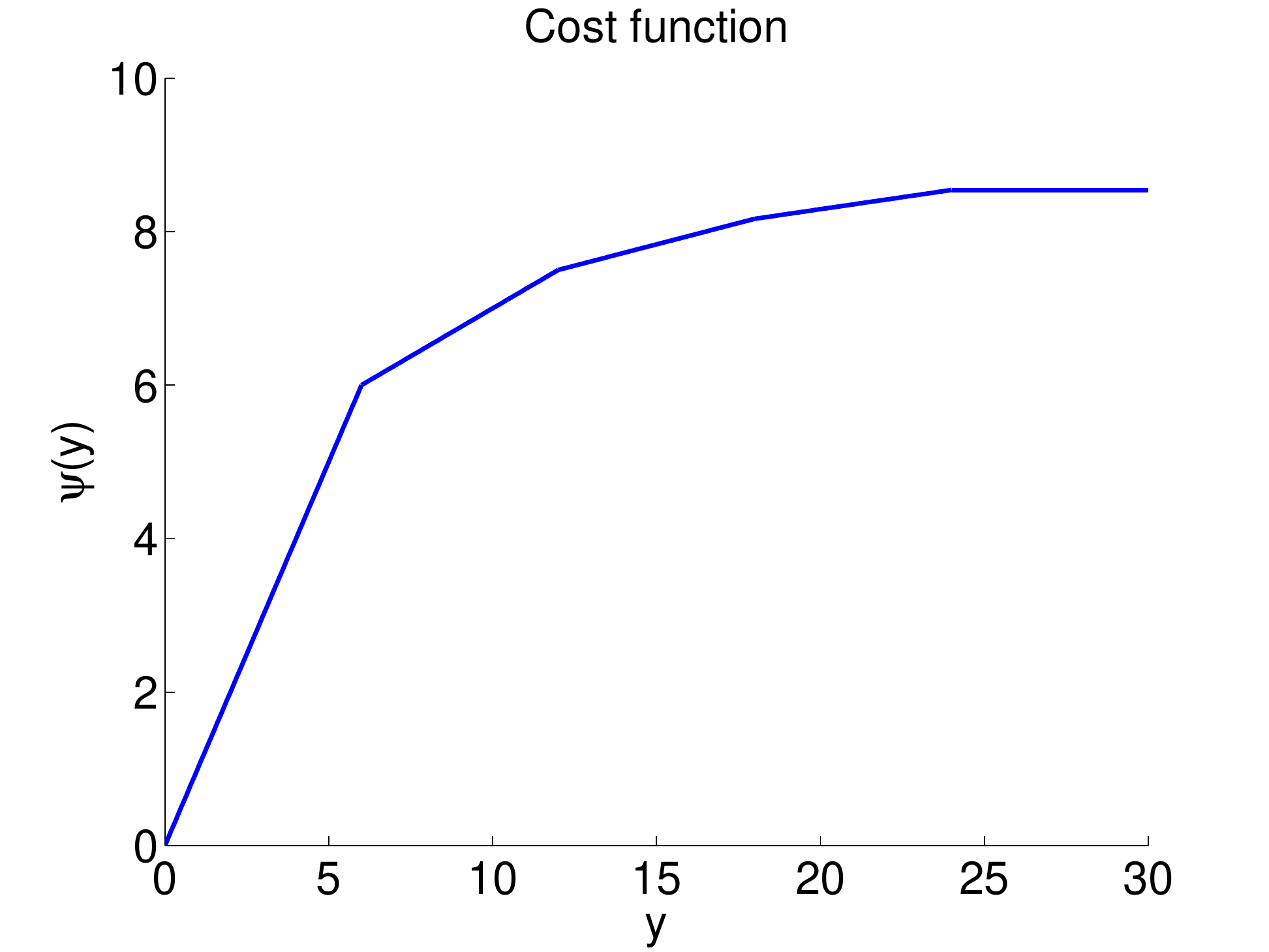}
        ~             \includegraphics[width=0.18\textwidth]{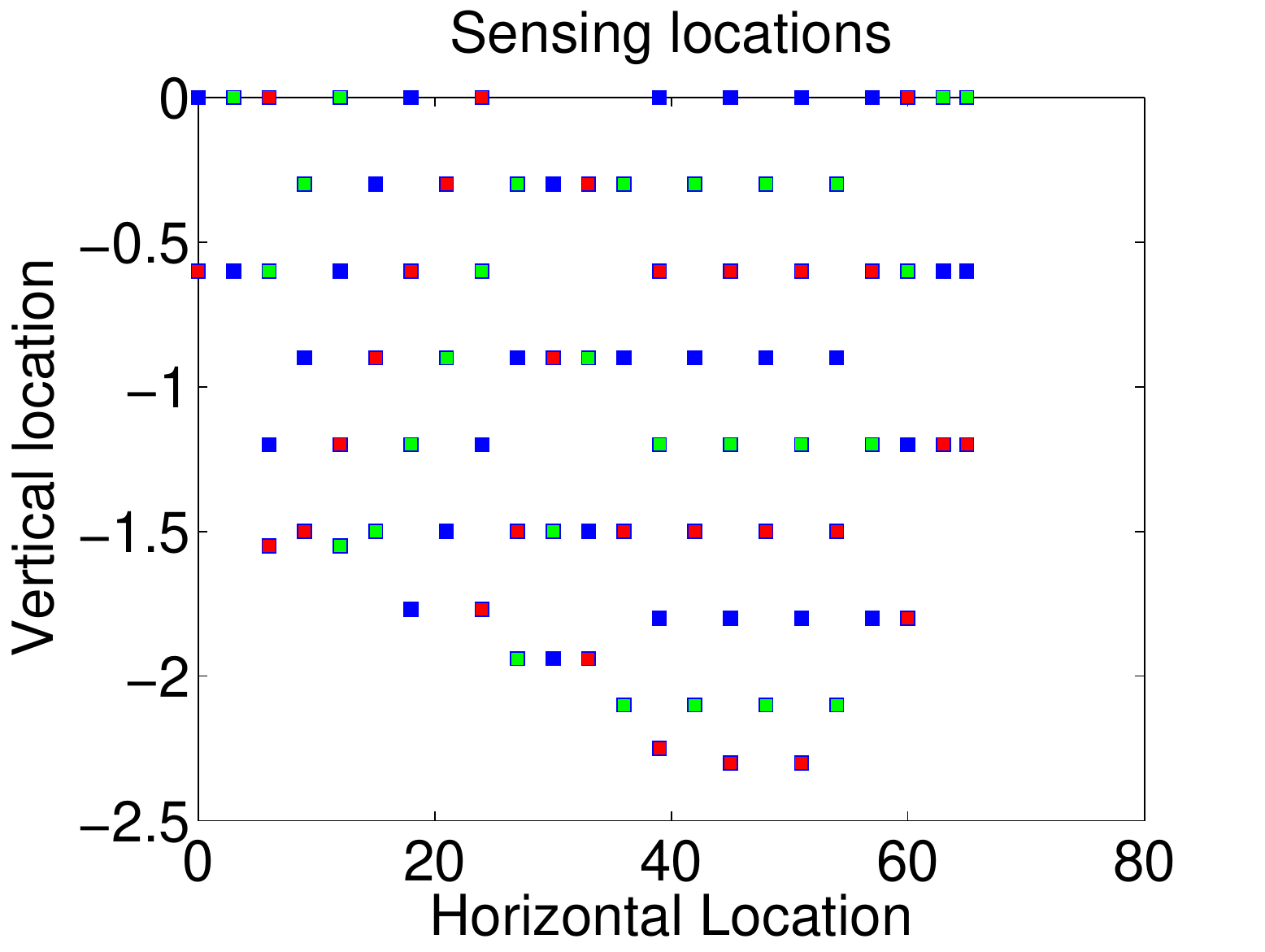}
        ~                \includegraphics[width=0.18\textwidth]{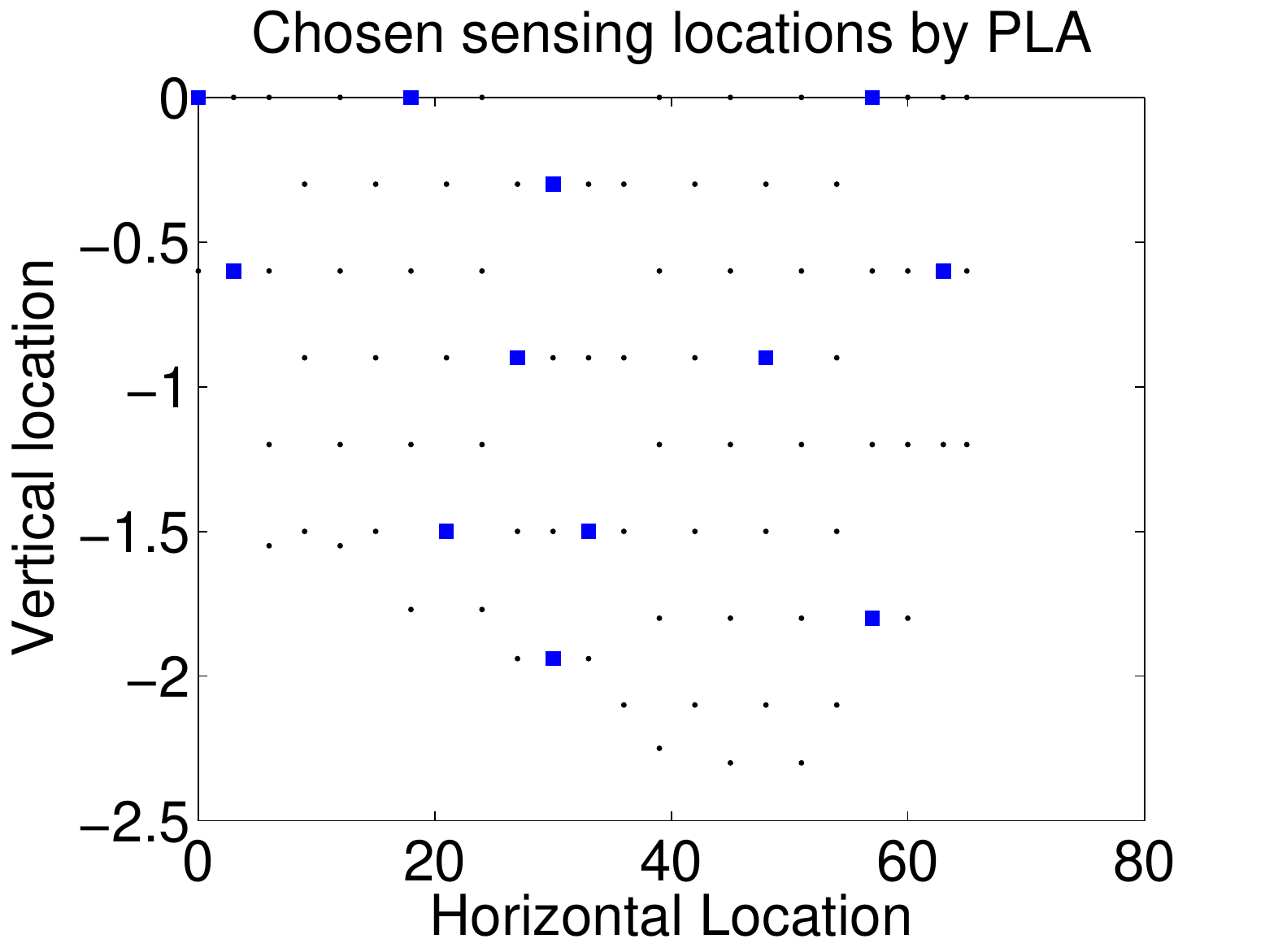}
        ~                \includegraphics[width=0.18\textwidth]{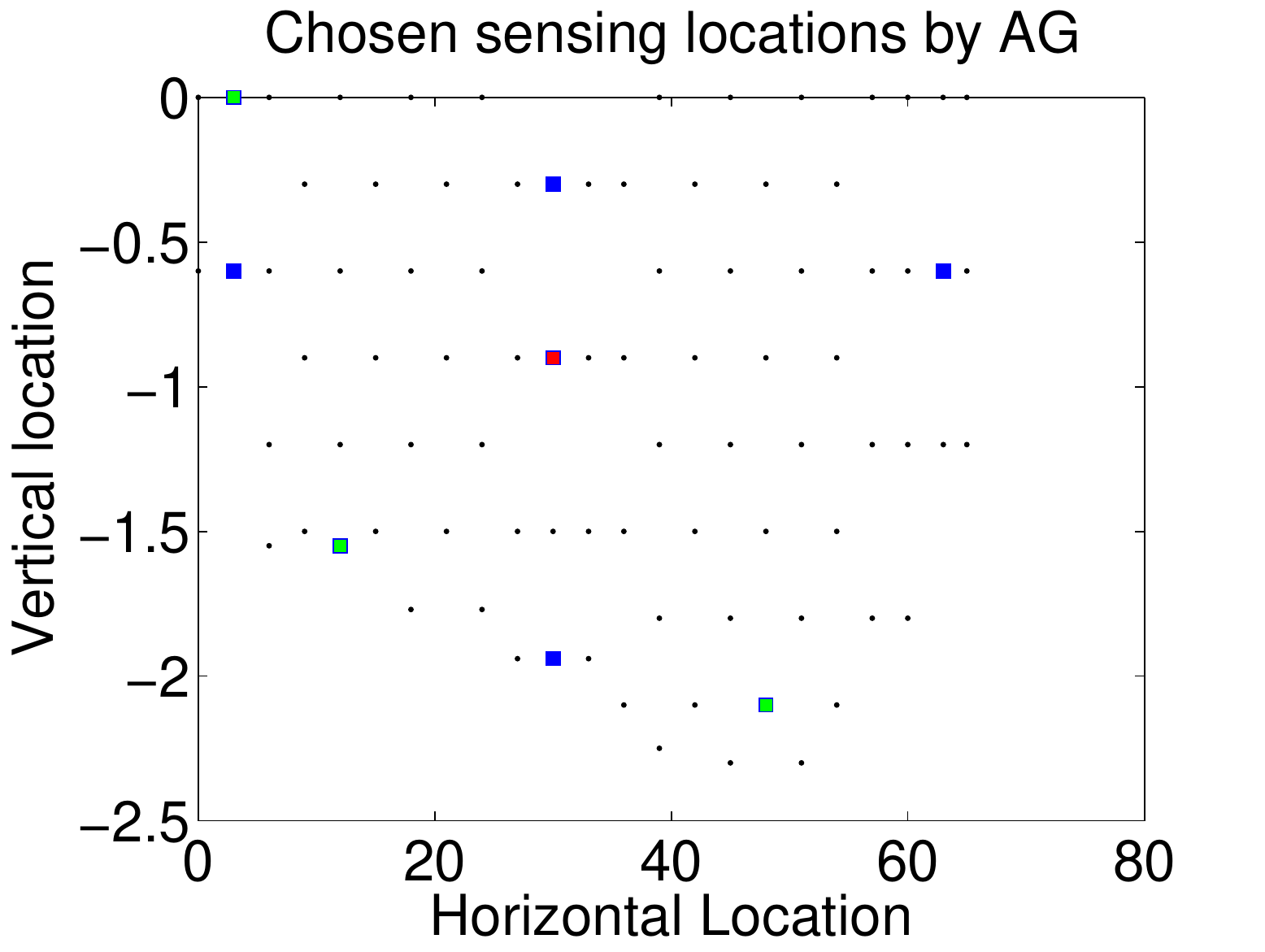}
        ~               \includegraphics[width=0.18\textwidth]{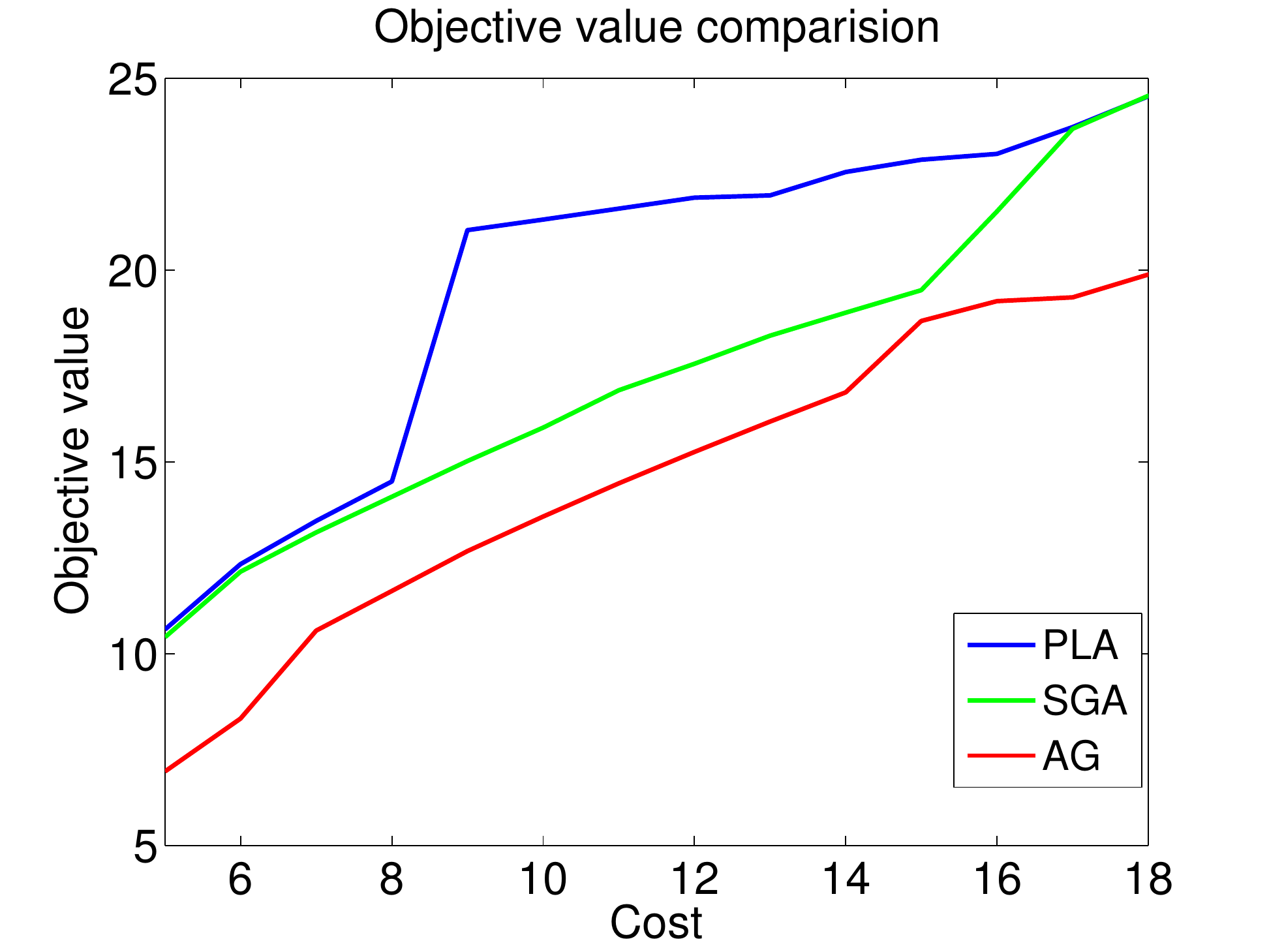}
        \caption{(a) Cost function $\psi_i(y)$ used, (b) set of locations with the three colors (blue, red, green) referring to the three types of sensors, (c) Sensors chosen by PLA, (d) Sensors chosen by AG and (e) plots of the objective value of different algorithms (please zoom in for details).}\label{fig:sensorplace}
\end{figure*}

We next consider an application of sensor placement. A number of natural models for this problem are forms of submodular maximization~\cite{krause2009optimizing, krause2008near}. A natural model, that performs very well in practice, is to maximize the mutual information $I(X_A; X_{V \backslash A})$, where $A$ refers to the set of sensors chosen. \cite{krause2009optimizing, krause2008near} investigate this in the setting of additive costs on the sensors. Often however, the costs are not additive in practice. In fact, very often, they are also submodular~\cite{krause2008near}, and a natural model is,
\begin{align}\label{costfn}
f(X) = \sum_{i=1}^k \psi_i( c(X \cap S_i) )
\end{align}
 where $\psi_i$s are concave, $c(j)$ is the cost of sensor $j$ and $S_1, \cdots, S_k$ are groups of similar sensors. This was posed as an open problem in \cite{krause2008near}. We can naturally pose this as instances of Problem 3, where $g(X) = I(X_A; X_{V \backslash A})$ and $f(X)$ is the cost function above. Note that we could equivalently also express this as an instance of Problem 2 with a constraint on $g$ while minimizing $f$.
 
We consider real world data of placing sensors to predict the pH
values from the lake of Merced~\cite{krause2009optimizing}. We also
assume that the function $f$ is piece-wise linear, shown in
Figure~\ref{fig:sensorplace}a (far left).  
Figure~\ref{fig:sensorplace}b shows the locations (horizontal and vertical). We assume that there are three kinds of locations, shown in blue, green and red colors respectively, and the costs of placing sensors in the same kind of location is discounted. Correspondingly, we assume the cost function is an instance of function Equation~\eqref{costfn} with $k = 3$. For simplicity, we assume also that all three types of sensor locations have the same coverage model (though, in general, it would make sense for them to have different models for coverage, based on their type). Under this assumption, the optimal configuration would tend to be spatially diverse, yet cooperative (in the sense, that the same type of sensors would be chosen). 
\JTR{TODO: This section needs to be fixed. We only have one type of sensor,
but multiple types of locations. The types of locations interact (i.e., underwater,
mountain, on a tower) in that it is cheaper to place sensors
in similar types of locations (due to shared fixed costs of purchasing
equipment, human capital costs and shared fixed finding and hiring costs of personnel, economies of scale of equipment, etc.). Also, the above right plot should
be changed to ``cost'' rather than budget.}

We compare three algorithms: PLA, and SGA (both on Problem 3), and a simple cost agnostic greedy algorithm (AG), which ignores the cost function $f$, and greedily adds sensors. Figure~\ref{fig:sensorplace}c shows the sensors chosen by PLA (the cost sensitive one), and Figure~\ref{fig:sensorplace}d shows the choices of AG (the cost agnostic one). While both have the same cost budget, the cost agnostic one does not utilize the discounts of placing sensors in similar locations, and correspondingly, places fewer sensors. The cost sensitive algorithms (PLA and SGA) on the other hand, simultaneously achieve coverage, while making use of the discounts. Figure~\ref{fig:sensorplace}e plots the objective functions attained by the three algorithms. We see that both PLA and SGA, outperform the agnostic greedy algorithm. Moreover, PLA also performs better than SGA. Note that the function $f$, used in this case is piece-wise linear, and correspondingly PLA is exact in this case.

\section{Conclusions}
\label{sec:conclusions}

In this paper, we investigated a new class of algorithms for various
forms of constrained submodular programs, with a special subclass of
submodular cost functions. We focus on problems that for the general
class of submodular functions are hard, and yet occur naturally in
many applications. We showed that when we restrict the class of
functions to low rank sums of concave over modular functions, we can
obtain significantly improved worst case theoretical results. We also
complemented our results with experimental results in sensor placement
and image correspondence. An immediate open question is whether there
are similar algorithms for other rich and useful subclasses of
submodular functions. In particular, it would be interesting if one
can remove the low rank assumption, and provide tighter approximation
algorithms for general sums of concave over modular functions, which
would be very powerful.

This material is based upon work supported by the National Science
Foundation under Grant No. (IIS-1162606), as well as a Google and a Microsoft
award. This work was also supported in part by the CONIX Research Center,
one of six centers in JUMP, a Semiconductor Research Corporation (SRC)
program sponsored by DARPA.

\bibliographystyle{abbrv}
\bibliography{submod}

\begin{thebibliography}{10}

\bibitem{bai2016algorithms}
W.~Bai, R.~Iyer, K.~Wei, and J.~Bilmes.
\newblock Algorithms for optimizing the ratio of submodular functions.
\newblock In {\em International Conference on Machine Learning}, pages
  2751--2759, 2016.

\bibitem{boykov2004experimental}
Y.~Boykov and V.~Kolmogorov.
\newblock An experimental comparison of min-cut/max-flow algorithms for energy
  minimization in vision.
\newblock {\em TPAMI}, 26(9):1124--1137, 2004.

\bibitem{caetano2009learning}
T.~S. Caetano, J.~J. McAuley, L.~Cheng, Q.~V. Le, and A.~J. Smola.
\newblock Learning graph matching.
\newblock {\em Pattern Analysis and Machine Intelligence, IEEE Transactions
  on}, 31(6):1048--1058, 2009.

\bibitem{fujishige2005submodular}
S.~Fujishige.
\newblock {\em Submodular functions and optimization}, volume~58.
\newblock Elsevier Science, 2005.

\bibitem{fujishige1999minimizing}
S.~Fujishige and S.~Iwata.
\newblock Minimizing a submodular function arising from a concave function.
\newblock {\em Discrete applied mathematics}, 92(2):211--215, 1999.

\bibitem{goel2009optimal}
G.~Goel, P.~Tripathi, and L.~Wang.
\newblock Combinatorial problems with discounted price functions in multi-agent
  systems.
\newblock In {\em FSTTCS}, 2010.

\bibitem{goemans2009approximating}
M.~Goemans, N.~Harvey, S.~Iwata, and V.~Mirrokni.
\newblock Approximating submodular functions everywhere.
\newblock In {\em SODA}, pages 535--544, 2009.

\bibitem{goyal2013fptas}
V.~Goyal and R.~Ravi.
\newblock An {FPTAS} for minimizing a class of low-rank quasi-concave functions
  over a convex set.
\newblock {\em Operations Research Letters}, 41(2):191--196, 2013.

\bibitem{iwata2009submodular}
S.~Iwata and K.~Nagano.
\newblock Submodular function minimization under covering constraints.
\newblock In {\em In FOCS}, pages 671--680. IEEE, 2009.

\bibitem{rkiyeruai2012}
R.~Iyer and J.~Bilmes.
\newblock Algorithms for approximate minimization of the difference between
  submodular functions, with applications.
\newblock {\em In UAI}, 2012.

\bibitem{rkiyersubmodBregman2012}
R.~Iyer and J.~Bilmes.
\newblock The submodular {B}regman and {L}ov\'asz-{B}regman divergences with
  applications.
\newblock In {\em NIPS}, 2012.

\bibitem{nipssubcons2013}
R.~Iyer and J.~Bilmes.
\newblock {Submodular Optimization with Submodular Cover and Submodular
  Knapsack Constraints}.
\newblock In {\em NIPS}, 2013.

\bibitem{curvaturemin}
R.~Iyer, S.~Jegelka, and J.~Bilmes.
\newblock {Curvature and Optimal Algorithms for Learning and Minimizing
  Submodular Functions }.
\newblock In {\em Neural Information Processing Society (NIPS)}, 2013.

\bibitem{nips2013extendedvcurv}
R.~Iyer, S.~Jegelka, and J.~Bilmes.
\newblock {Curvature and Optimal Algorithms for Learning and Optimization of
  Submodular Functions: Extended arxiv version}, 2013.

\bibitem{rkiyersemiframework2013}
R.~Iyer, S.~Jegelka, and J.~Bilmes.
\newblock {Fast Semidifferential based Submodular function optimization}.
\newblock In {\em ICML}, 2013.

\bibitem{jegelka2011-inference-gen-graph-cuts}
S.~Jegelka and J.~A. Bilmes.
\newblock Approximation bounds for inference using cooperative cuts.
\newblock In {\em ICML}, 2011.

\bibitem{jegelka2011-nonsubmod-vision}
S.~Jegelka and J.~A. Bilmes.
\newblock Submodularity beyond submodular energies: coupling edges in graph
  cuts.
\newblock In {\em CVPR}, 2011.

\bibitem{jegelka2013interactive}
S.~Jegelka, A.~Kapoor, and E.~Horvitz.
\newblock An interactive approach to solving correspondence problems.
\newblock {\em International Journal of Computer Vision}, pages 1--10, 2013.

\bibitem{kelner2007hardness}
J.~A. Kelner and E.~Nikolova.
\newblock On the hardness and smoothed complexity of quasi-concave
  minimization.
\newblock In {\em Foundations of Computer Science, 2007. FOCS'07. 48th Annual
  IEEE Symposium on}, pages 472--482. IEEE, 2007.

\bibitem{kohliOJ13}
P.~Kohli, A.~Osokin, and S.~Jegelka.
\newblock A principled deep random field for image segmentation.
\newblock In {\em CVPR}, 2013.

\bibitem{krause2009optimizing}
A.~Krause and C.~Guestrin.
\newblock Optimizing sensing: From water to the web.
\newblock Technical report, DTIC Document, 2009.

\bibitem{krause2008near}
A.~Krause, A.~Singh, and C.~Guestrin.
\newblock Near-optimal sensor placements in {G}aussian processes: Theory,
  efficient algorithms and empirical studies.
\newblock {\em JMLR}, 9:235--284, 2008.

\bibitem{mittal2013fptas}
S.~Mittal and A.~S. Schulz.
\newblock An {FPTAS} for optimizing a class of low-rank functions over a
  polytope.
\newblock {\em Mathematical Programming}, 141(1-2):103--120, 2013.

\bibitem{nemhauser78}
G.~Nemhauser and L.~Wolsey.
\newblock Best algorithms for approximating the maximum of a submodular set
  function.
\newblock {\em Mathematics of Operations Research}, 3(3):177--188, 1978.

\bibitem{nikolova2010approximation}
E.~Nikolova.
\newblock Approximation algorithms for offline risk-averse combinatorial
  optimization, 2010.

\bibitem{ogale2005shape}
A.~S. Ogale and Y.~Aloimonos.
\newblock Shape and the stereo correspondence problem.
\newblock {\em International Journal of Computer Vision}, 65(3):147--162, 2005.

\bibitem{Qian2017OptimizingRO}
C.~Qian, J.-C. Shi, Y.~Yu, K.~Tang, and Z.-H. Zhou.
\newblock Optimizing ratio of monotone set functions.
\newblock In {\em IJCAI}, pages 2606--2612, 2017.

\bibitem{sviridenko2004note}
M.~Sviridenko.
\newblock A note on maximizing a submodular set function subject to a knapsack
  constraint.
\newblock {\em Operations Research Letters}, 32(1):41--43, 2004.

\bibitem{svitkina2008submodular}
Z.~Svitkina and L.~Fleischer.
\newblock Submodular approximation: Sampling-based algorithms and lower bounds.
\newblock In {\em FOCS}, pages 697--706, 2008.

\bibitem{vazirani2004approximation}
V.~V. Vazirani.
\newblock {\em Approximation algorithms}.
\newblock springer, 2004.

\bibitem{wolsey1982analysis}
L.~A. Wolsey.
\newblock An analysis of the greedy algorithm for the submodular set covering
  problem.
\newblock {\em Combinatorica}, 2(4):385--393, 1982.

\end{thebibliography}
\end{document}